\documentclass[12pt]{article}
% 11pt is the size for STOC

%% Language and font encodings
\usepackage[english]{babel}
\usepackage[utf8x]{inputenc}
\usepackage[T1]{fontenc}

%% Sets page size and margins
% 1 in margins for stoc
\usepackage[letterpaper,top=1in,bottom=1in,left=1in,right=1in,marginparwidth=1.75cm]{geometry}

%% Useful packages
\usepackage{amsmath,amsfonts,amsthm,amssymb,mathrsfs,dsfont} % Math packages
\usepackage{mathtools}
\usepackage{graphicx}
\usepackage[colorinlistoftodos]{todonotes}
\usepackage[colorlinks=true, allcolors=blue]{hyperref}
\usepackage[capitalize,nameinlink]{cleveref}
\usepackage{verbatim}
%\usepackage{showlabels}

% Commonly used stylized letters

\newcommand{\bE}{\mathbb{E}}
\newcommand{\R}{\mathbb{R}}
\newcommand{\F}{\mathcal{F}}
\newcommand{\eps}{\varepsilon}

% Probabilistic notation
\newcommand{\E}{\bE}      % Expectation

\newcommand{\KL}{\mathop{\bf KL\/}}
\newcommand{\bone}{\boldsymbol{1}}

% Collaboration macros
\newcommand{\vnote}[1]{\textcolor{red}{\small {\textbf{(Vishesh: }#1\textbf{) }}}}
\newcommand{\fnote}[1]{\textcolor{blue}{\small {\textbf{(Fred: }#1\textbf{) }}}}

% environments
\newtheorem{theorem}{Theorem}[section]
\newtheorem*{namedtheorem}{\theoremname}
\newcommand{\theoremname}{testing}

\newtheorem{lemma}[theorem]{Lemma}

\newtheorem*{question*}{Question}

\theoremstyle{definition}

\newtheorem{defn}[theorem]{Definition}
\newtheorem{remark}[theorem]{Remark}
\newtheorem{example}[theorem]{Example}

\theoremstyle{plain}

% algorithm package
\usepackage{algpseudocode,algorithm,algorithmicx}

\title{The Mean-Field Approximation: Information Inequalities, Algorithms, and Complexity}
\author{Vishesh Jain\thanks{Massachusetts Institute of Technology. Department of Mathematics. Email: {\tt visheshj@mit.edu}} \and Frederic Koehler\thanks{Massachusetts Institute of Technology. Department of Mathematics. Email: {\tt fkoehler@mit.edu}} \and Elchanan Mossel\thanks{Massachusetts Institute of Technology. Department of Mathematics and IDSS. Supported by ONR grant N00014-16-1-2227   and 
NSF CCF-1665252 and DMS-1737944. Email: {\tt elmos@mit.edu} } }

\begin{document}
\maketitle
% Skip title page in numbering
\thispagestyle{empty}
\setcounter{page}{0}

\begin{abstract}
The mean field approximation to the Ising model is a canonical variational tool that is used for analysis and inference in Ising models. We provide a simple and optimal bound for the KL error of the mean field approximation for Ising models on general graphs, and extend it to higher order Markov random fields. Our bound improves on previous bounds obtained in work in the graph limit literature by Borgs, Chayes, Lov{\'a}sz, S{\'o}s, and Vesztergombi and another recent work by Basak and Mukherjee. Our bound is tight up to lower order terms. 
%not just for approximation by product distributions but for many other approximating distributions, including Markov Random Fields on acyclic graphs. 

Building on the methods used to prove the bound, along with techniques from combinatorics and optimization, 
we study the algorithmic problem of estimating the (variational) free energy for Ising models and general Markov random fields. For a graph $G$ on $n$ vertices and interaction matrix $J$ with Frobenius norm $\| J \|_F$, we provide algorithms that approximate the free energy within an additive error of $\epsilon n \|J\|_F$ in time $\exp(poly(1/\epsilon))$. We also show that approximation within $(n \|J\|_F)^{1-\delta}$ is NP-hard for every $\delta > 0$. Finally, we provide more efficient approximation algorithms,
which find the optimal mean field approximation, for ferromagnetic Ising models and for Ising models satisfying Dobrushin's condition.  
\end{abstract}

\newpage

\section{Introduction}
One of the most widely studied models in statistical physics is the Ising model. An \emph{Ising model} is specified by
a probability distribution on the discrete cube $\{\pm1\}^n$ of the form
\[ P[X = x] := \frac{1}{Z} \exp(\sum_{i,j} J_{i,j} x_i x_j) = \frac{1}{Z} \exp(x^T J x), \]
where the collection $\{J_{i,j}\}_{i,j\in\{1,\dots,n\}}$ are the entries of
an arbitrary real, symmetric matrix with zeros on the diagonal. The distribution $P$ is referred to as the \emph{Boltzmann distribution}. The normalizing constant $Z=\sum_{x\in\{\pm1\}^{n}}\exp(\sum_{i,j=1}^{n}J_{i,j}x_{i}x_{j})$
is called the \emph{partition function }of the Ising model and the quantity $\F := \log{Z}$ is known as the \emph{free energy}. 

The free energy is a key physical quantity. It provides important information %one of the most
%useful to compute to get information 
about the structure of the Boltzmann distribution. Given a naturally growing family of (possibly weighted) graphs with adjacency matrices $M_n$, %for example taking $M$ to be the adjacency matrix of the 2D $n \times n$ square lattice, 
one of the main problems of interest in statistical physics is to compute the asymptotics of the (suitably renormalized) free energy of the sequence of Ising models $J_n(\beta) = \beta M_n$ in the $n \to \infty$ limit for all values of $\beta$, where $\beta > 0$ is a parameter referred to as the \emph{inverse temperature}. This is because understanding the behavior of the free energy reveals a wealth of information about the underlying Ising model. For instance, points of non-smoothness in the limiting free energy (as a function of $\beta$) reveal the location of \emph{phase transitions}, which typically correspond to significant
changes in the behavior of the underlying Boltzmann distribution e.g. the emergence of long-range correlations. In addition, many other quantities of interest (such as net magnetization) can be computed in terms of free energies.

Although originally introduced in statistical physics, Ising models and their generalizations have also found a wide range of applications in many different areas like statistics, computer science, combinatorics, social networks, and biology (see, e.g., the references in \cite{basak2017universality}). Studying the free energy is of great interest in many of these applications as well. For instance, consider the problem in combinatorial optimization of maximizing the quadratic form $x \mapsto x^T M x$ over the hypercube $\{\pm 1\}^{n}$; this is essentially the problem of estimating the cut norm of a matrix and has max-cut as the special case when all of the entries are negative. The free energy of the model with interaction matrix $J_{\beta}:=\beta M$ provides a natural tempering of this optimization problem in the following sense:   
\[ \frac{1}{\beta} \mathcal{F}_{\beta} = \frac{1}{\beta} \log \sum_{x\in \{\pm 1\}^{n}}\exp\left(\beta \sum_{i,j=1}^{n}M_{ij}x_{i}x_{j}\right) \to \max_{x \in \{\pm 1\}^n} \sum_{i,j = 1}^n M_{ij} x_i x_j \]
as $\beta \to \infty$. % More precisely, we can think of $\beta$ as a Lagrange multiplier which controls the trade-off between energy (the value of $x^T M x$) and entropy. In order to understand this, we recall the variational characterization of the free energy. 

\begin{comment}
consider a one-parameter family of interaction matrices
$J_{\beta} = \beta M$ for a fixed matrix $M$ where $\beta > 0$ represents
the ``inverse temperature'' parameter. If we consider the quadratic function $x \mapsto x^T M x$, the free energy of the model with interaction matrix $J_{\beta}$ provides a natural tempering of the maximization problem\footnote{The optimization problem here is approximating the $\|\cdot\|_{\infty \to 1}$ norm, which is equivalent to estimating the cut norm up to constants. When the entries are all negative this is max-cut. } for $M$
over the hypercube $\{\pm 1\}^n$ in the following sense: 
%for any fixed $\beta$, $\mathcal{F}_{\beta}$ is a smooth function of $M$ however  as $\beta \to \infty$ we see that
\[ \frac{1}{\beta} \mathcal{F}_{\beta} = \frac{1}{\beta} \log \sum_{x\in \{\pm 1\}^{n}}\exp(\beta \sum_{i,j=1}^{n}M_{ij}x_{i}x_{j}) \to \max_{x \in \{\pm 1\}^n} \sum_{i,j = 1}^n M_{ij} x_i x_j \]
as $\beta \to \infty$.  More precisely, we can think of $\beta$ as a Lagrange multiplier which controls the trade-off between energy (the value of $x^T M x$ ) and entropy; to understand this we need to recall an alternative characterization of the free energy. 
\end{comment}
In fact for every finite $\beta$, the free energy corresponds
to the objective value of a natural optimization problem of its own.
More precisely, the free energy is characterized by the following \emph{variational principle} (dating back to Gibbs, see the references in \cite{ellis2007entropy}):
\begin{equation}
\label{eqn:free-energy-variational-char}
\F = \max_{\mu} \left[\sum_{i,j} J_{ij} \E_{\mu}[X_i X_j] + H(\mu)\right],
\end{equation}
where $\mu$ ranges over all probability distributions on the boolean hypercube $\{\pm 1\}^{n}$. This can be seen by noting that 
\begin{equation}
\label{eqn:free-energy-KL}
\KL(\mu ||P)=\F - \sum_{i,j} J_{ij} \E_{\mu}[X_i X_j] - H(\mu),
\end{equation}
and recalling that $\KL(\mu ||P) \geq 0$ with equality if and only if $\mu = P$.
 
By substituting $J = \beta M$ in equation \cref{eqn:free-energy-variational-char}, we see %by the theory of Lagrange multipliers 
that the Boltzmann distribution is simply the maximum entropy distribution $\mu$ for a fixed value
of the expected energy $\E_{\mu}[x^T M x]$. Thus, studying the free
energy for different values of $\beta$ provides much richer information about
the optimization landscape of $x \mapsto x^T M x$ over the hypercube than just the maximum value, e.g., in the max-cut case, the free energies encode
information about non-maximal cuts as well (see e.g. \cite{borgs2012convergent} for related discussion).
Apart from the applications mentioned above, it is clear by definition that knowledge of the free energy (or equivalently, the partition function) allows one to perform fundamental inference tasks like computing marginals and posteriors in Ising models and their generalizations. Unfortunately, the partition function, which is defined as a sum of exponentially many terms, turns out to be both theoretically and computationally intractable. Closed form expressions for the partition function are extremely hard to come by; in fact, providing such an expression even for the Ising model on the standard $3$-dimensional lattice remains one of the most outstanding problems in statistical physics. From a computational perspective, it is known that exactly computing the partition function of an Ising model with $J$ the adjacency matrix of a nonplanar graph is NP-hard (\cite{Istrail2000StatisticalMT}), and that approximate sampling/approximating the partition function is still NP-hard, even e.g. in the case of graphs with degree bounded by a small constant (see \cite{sly-sun}).
%\fnote{Citation? Also by compute does this mean PTAS?}\vnote{I meant actual compute. I'll add a citation}

\subsection{The mean-field approximation: structural results}
Since exact computations, either analytic or otherwise, are typically infeasible, it is natural to look at schemes for approximating the partition function or the free energy. The \emph{naive mean-field approximation} provides one of the simplest and most common methods for doing this.  

The mean-field approximation to the free energy (also referred to as the \emph{variational free energy}) is obtained by restricting the distributions $\mu$ in the variational characterization of the free energy (\cref{eqn:free-energy-variational-char}) to be product distributions. Accordingly, we define the \emph{variational free energy} by 
\[ \F^* := \max_{x \in [-1,1]^n} \left[\sum_{i,j} J_{ij}
      x_i x_j + \sum_i H\left(\frac{x_i +
        1}{2}\right)\right]. \] 
Indeed, if $\bar{x} = (\bar{x}_1,\dots,\bar{x}_n)$ is the optimizer in the above definition, then the product distribution $\nu$ on the boolean hypercube, with the $i^{th}$ coordinate having expected value $\bar{x}_i$, minimizes $\KL(\mu||P)$ among all product distributions $\mu$. Moreover, it is immediately seen from \cref{eqn:free-energy-KL} that the value of this minimum KL is exactly $\F - \F^*$. Thus, the quantity $\F - \F^*$, which measures the quality of the mean-field approximation, may be interpreted information theoretically as the divergence between the closest product distribution to the Boltzmann distribution and the Boltzmann distribution itself.  
 
Owing to its simplicity, the mean field approximation %for Ising models 
has long been used in statistical physics (see, e.g., \cite{parisi1988statistical} for a textbook treatment) and also in Bayesian statistics \cite{peterson-anderson,jordan1999introduction,wainwright-jordan-variational}, where it is one of the prototypical examples of a \emph{variational method}. As a variational method, the mean field approximation has the attractive property that it always gives a valid lower bound for the free energy. It is well known \cite{ellis-newman} that the mean field approximation
is very accurate for the Curie-Weiss model, which is the Ising
model on the complete graph (see also \cref{example:curie-weiss}
for a complete description of the model). On the other hand, it is also known (see e.g. \cite{DemboMontanari:10}) that for very sparse graphs like trees of bounded arity, this is not the case. 
In recent years, considerable effort has gone into bounding the error of the mean-field approximation on more general
graphs; we will give a detailed comparison of our results
with recent work in \cref{sec:previous-results}. Our main structural result is the following inequality, which gives an explicit bound on the error of the mean field approximation for general graphs:

\begin{theorem}\label{thm-main-structural-result} 
Fix an Ising model $J$ on $n$ vertices.
Let $\nu := \arg\min_{\nu} \KL(\nu || P)$, where $P$ is the Boltzmann distribution and the minimum ranges
over all product distributions. %and $P$ is the Boltzmann distribution.
Then,
$$ \KL(\nu || P)  = \F - \F^{*} \leq 200 n^{2/3} \|J\|_F^{2/3} \log^{1/3}(n \|J\|_F + e).$$
\end{theorem}
Here, $\|J\|_F := \sqrt{\sum_{i,j}J_{i,j}^2}$ is the \emph{Frobenius norm} of the matrix $J$. 

This result is \emph{tight up to logarithmic factors}, not just for product distributions, but also for a large class of variational methods. In particular, this class includes approximation by bounded mixtures of product distributions (as considered in \cite{jaakkola1998improving}),
as well as (mixtures of) restricted classes of Ising models, e.g. Ising models on acyclic graphs (see the discussion of tractable families in \cite{wainwright-jordan-variational}). Some other
methods for estimating the free energy (such as the Bethe approximation and the method of \cite{risteski-ising}) optimize over \emph{pseudo-distributions}
of some form and so the theorem itself cannot be directly applied, but essentially the same obstruction should still apply.
%(this is the widely studied \emph{Bethe approximation}, for its use as a
%variational method see, e.g., the discussion in \cite{wainwright-jordan-variational}).
\begin{theorem}\label{thm:variational-lb}
Let $(\mathcal{Q}_n)_{n = 0}^{\infty}$ be a sequence of families
of probability distributions on $\{\pm 1\}^n$ which are closed under the following two operations:
\begin{enumerate}
\item Conditioning on variables: if
  $Q \in \mathcal{Q}_n$, $i \in [n]$, and $x_i \in \{\pm 1\}$, then the conditional distribution of $X_{\sim i}$ under $Q$ given $X_i = x_i$, which is a
  probability distribution on $\{\pm 1\}^{n - 1}$, is in $\mathcal{Q}_{n - 1}$.
\item Taking products: if $Q_1 \in \mathcal{Q}_m$ and $Q_2 \in \mathcal{Q}_n$,
  then $Q_1 \times Q_2 \in \mathcal{Q}_m \times \mathcal{Q}_n$.
\end{enumerate}
Furthermore, suppose that $(\mathcal{Q}_n)_{i = 1}^\infty$ does not contain the class of all probability
distributions induced by Ising models. Then, there exists a sequence $(J_i)_{i = 1}^{\infty}$ of Ising models of increasing size $n_i$ and with Boltzmann distributions $P_{J_i}$ %, of increasing size such that if $n_i$ is the number of vertices of $J_i$ and 
such that 
\[ \KL(Q_{n_i} || P_{J_i}) = \Omega(n_i^{2/3}\|J_{n_i}\|_F^{2/3}), \]
where $Q_{n_i} := \arg\min_{Q \in \mathcal{Q}_{n_i}} \KL(Q,P_{J_i})$.
\end{theorem}

Our methods extend in a straightforward manner not just to Ising models with external fields, but indeed to general higher order Markov random fields, as long
as we assume a bound $r$ on the order of the highest interaction (i.e. size of the largest hyper-edge). The
results also generalize naturally to the case of non-binary alphabets but for simplicity, we only discuss the binary case.
%We will review the definitions in \cref{mrf}. \fnote{A better way to state this: let $J$ be an arbitrary polynomial of degree $r$ and let $\|J\|_F^2$ denote the sum of squares of its coefficients. (It's fourier energy.) Then we don't need any hard definitions.}
\begin{defn}%[Order $r$ Markov Random Field]
Let $J$ be an arbitrary function on the hypercube $\{ \pm 1\}^n$
and suppose that the degree of $J$ is $r$ i.e. the Fourier decomposition of $J$ is 
$J(x) = \sum_{\alpha \subset [n]} J_{\alpha} x^{\alpha}$
with $r = \max_{J_{\alpha} \ne 0} |\alpha|$.
The corresponding \emph{order $r$ (binary) Markov random field} is the probability distribution
on $\{\pm 1\}^n$ given by
\[ P(X = x) = \frac{1}{Z}\exp(J(x)) \]
where the normalizing constant $Z$ is referred to as the \emph{partition function}.
For any polynomial $J$ we define $J_{=d}$ to be its $d$-homogeneous part and
 $\|J\|_F$ to be the square root of the total Fourier energy of $J$ i.e. $\|J\|_F^2 := \sum_{\alpha} |J_{\alpha}|^2$.
\end{defn}
\begin{theorem}\label{thm-mrf-main-structural-result} 
Fix an order $r$ Markov random field $J$ on $n$ vertices.
Let $\nu := \arg\min_{\nu} \KL(\nu || P)$, where $P$ is the Boltzmann distribution and the minimum ranges
over all product distributions.
Then, 
%\fnote{maybe check a few more details of the proof,
%this is morally correct but the step optimizing $\epsilon$ may give something slightly different?}
$$ \KL(\nu || P)  = \F - \F^{*} \leq 2000r \max_{1 \le d \le r} d^{1/3}n^{d/3} \|J_{=d}\|_F^{2/3} \log^{1/3}(d^{1/3}n^{d/3} \|J_{=d}\|_F^{2/3} + e).$$
\end{theorem}

\subsection{Examples}
We give a few examples of natural families of Ising models in order to illustrate the consequences
of our bounds.
\begin{example}[Curie-Weiss]\label{example:curie-weiss}
As our first example, we show how our bounds imply classical
results about the Curie-Weiss model (see \cite{ellis-newman}), in which $J_{ij} = (\beta/2n)$
for $i \ne j$ and there is a uniform external field $h$. In this case, we can explicitly solve the variational problem;
indeed, by checking the first-order optimality condition (\cref{eqn:mean-field-equations}), we see that an optimal
product distribution with marginals $\E[X_i] = x_i$ must have 
$x_i = \tanh(\sum_{j: j \ne i} \beta x_j/n + h)$.
Furthermore, since $x_i < x_j$ implies that $\tanh(\sum_{k: k \ne i} \beta x_k/n + h) > \tanh(\sum_{k: k \ne j} \beta x_k/n + h)$, it follows that  we cannot have $x_i < x_j$ for any pair $(i,j)$. %, because
%we would find the opposite inequality holds for the rhs of the previous equation; 
Therefore, the optimal product distribution has all marginals equal to $x$, where $x$ is a solution of
\[ x = \tanh((1 - 1/n) \beta x + h). \]
Taking $n \to \infty$ and $h = 0$, this correctly predicts a phase transition at $\beta = 1$; the mean field equations go from having just one solution ($x = 0$) to two additional ``symmetry-breaking'' solutions with $x \ne 0$. By \cref{thm-mrf-main-structural-result}, we see that for any 
constant $\beta,h$, the normalized free energy $\mathcal{F}/n$ agrees with $\mathcal{F}^*/n$ in the $n \to \infty$ limit with error decaying at least as fast as $\tilde{O}(n^{-1/3})$.
\end{example}
\begin{example}[Uniform edge weights on graphs of increasing degree]
Fix $\beta \in \mathbb{R}$ and a sequence of graphs
$(G_{n_i})_{i = 1}^{\infty}$ with the number of vertices $n_i$ going to infinity, and let $m_i$ be the corresponding number of edges. Then, it is natural to look at the model
with uniform edge weights equal to $\beta n_i/m_i$, since this makes
the maximum value of $x^T J x$ on the order of $\Theta(n_i)$, which is 
the same scale as the entropy term in the variational definition of the free energy (\cref{eqn:free-energy-variational-char}). We say the model is \emph{ferromagnetic} if $\beta > 0$ and \emph{anti-ferromagnetic} if $\beta < 0$. Observe that $\|J\|_F = \beta n_i/\sqrt{m_i}$, so that by \cref{thm-main-structural-result}, we have $|\mathcal{F}/n_i - \mathcal{F}^*/n_i| = O(n_i^{1/3}\log^{1/3}{n_i}/m_i^{1/3})$. In particular, this goes to $0$ as long as $m_i = \omega(n_i \log n_i)$.
\end{example}
\begin{example}[Uniform edge weights on $r$-uniform hypergraphs]
Fix $\beta \in \mathbb{R}$ and
let $(G_{n_i})_{i=1}^{\infty}$ be a sequence of $r$-uniform hypergraphs
with $n_i$ vertices and $m_i$ hyperedges. Analogous to the graph case, we let $J(x) = \frac{\beta n_i}{m_i} \sum_{S \in E(G_{n_i})} x_S$,
so that the maximum of $J$ is on the same order as the entropy term in the free energy. We still have $\|J\|_F = \beta n_i/\sqrt{m_i}$, and see by \cref{thm-mrf-main-structural-result} that $|\mathcal{F}/n_i - \mathcal{F}^*/n_i| = O(n_i^{(r - 1)/3}\log{n_i}/m_i^{1/3})$. This converges
to $0$ as long as $m_i = \omega(n_i^{r - 1} \log{n_i})$.
\end{example}
\subsection{Algorithmic results}
Next, we study the algorithmic aspects of the mean field approximation and variational methods. We begin by showing that in a certain \emph{high-temperature regime} (specifically, the range of parameters satisfying the \emph{Dobrushin uniqueness criterion} \cite{dobrushin-uniqueness}), the minimization problem
defining the variational free energy is convex.  %Note that in our convention the total weight of the edge $(i,j)$ is $2 J_{ij}$.
%, so requiring $\sum_j |J_{ij}| \le (1 - \eta)/2$ is the same as requiring that the total absolute weight of edges adjacent to vertex $i$ is at most $1 - \eta$.
\begin{theorem}\label{thm:high-temperature-convex}
Suppose $J$ is the interaction matrix of an Ising model with
arbitrary external field $h_i$ at vertex $i$, and suppose that
for every row $i$ of $J$, we have $\sum_{j} 2|J_{ij}| \le 1$.
Then, the maximization problem defining the variational free energy is concave, and hence can be solved to additive $\epsilon$-error in time $poly(n,\log(1/\epsilon))$.
\end{theorem}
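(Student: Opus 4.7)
The plan is to reduce this to a standard concave maximization problem by computing the Hessian of the variational objective and showing it is negative semidefinite whenever the Dobrushin condition holds. Write the objective as
\[
F(x) \;=\; \sum_{i,j} J_{ij}\, x_i x_j \;+\; \sum_i h_i x_i \;+\; \sum_i H\!\left(\tfrac{x_i+1}{2}\right)
\]
on $x \in [-1,1]^n$. The Hessian splits into two pieces: the quadratic form contributes $2J$, and a direct differentiation gives $\frac{d^2}{dx_i^2} H((x_i+1)/2) = -\frac{1}{1-x_i^2}$, so the entropy contributes a diagonal matrix $-D(x)$ with entries $D_{ii}(x) = \frac{1}{1-x_i^2} \geq 1$. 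Concavity of $F$ is therefore equivalent to $D(x) - 2J \succeq 0$ everywhere on $(-1,1)^n$.

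The next step is to verify $D(x) - 2J \succeq 0$ using Gershgorin's circle theorem. Since $J_{ii} = 0$, each row $i$ of $D(x) - 2J$ has diagonal entry $D_{ii}(x) \geq 1$ and the sum of absolute values of its off-diagonal entries is $\sum_{j \neq i} 2|J_{ij}| \leq 1$ by hypothesis. Hence every Gershgorin disk is contained in the closed right half-plane, forcing all (real) eigenvalues of the symmetric matrix $D(x) - 2J$ to be nonnegative. This gives $\nabla^2 F(x) = 2J - D(x) \preceq 0$ throughout the interior of $[-1,1]^n$, so $F$ is concave on $[-1,1]^n$.

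To conclude algorithmically, it remains to solve a concave maximization of $F$ over the box $[-1,1]^n$ to additive accuracy $\epsilon$. The only subtlety is that $\nabla F$ blows up at the boundary of the box; however, $F$ itself is bounded and continuous on the closed box, and the ``energy'' term $x^\top J x + h^\top x$ changes by at most $O(n\|J\|_F + \|h\|_1)$ across the box, so restricting the optimization to the shrunken box $[-1+\delta, 1-\delta]^n$ with $\delta = \exp(-\mathrm{poly}(n,\|J\|_F,\|h\|_\infty,\log(1/\epsilon)))$ changes the optimum by at most $\epsilon/2$. On this shrunken box the gradient and Hessian are bounded, so standard concave programming (the ellipsoid method, or an interior-point method) finds an $\epsilon/2$-approximate maximizer in time $\mathrm{poly}(n, \log(1/\epsilon))$, yielding the stated bound.

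The main point of potential concern is the boundary behavior of the entropy term, but as above this is handled by a routine truncation argument; the heart of the theorem is the Gershgorin calculation, which is immediate once the correct form of the Hessian is written down and is exactly tight against the Dobrushin threshold $\sum_j 2|J_{ij}| \leq 1$.
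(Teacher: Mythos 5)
Your proposal is correct and follows essentially the same route as the paper: compute the Hessian as $2J$ minus the diagonal entropy term (with $\frac{d^2}{dx^2}H((1+x)/2)=-\frac{1}{1-x^2}\le -1$), apply Gershgorin to conclude negative semidefiniteness under $\sum_j 2|J_{ij}|\le 1$, and invoke the ellipsoid method. The only cosmetic differences are that the paper applies Gershgorin to $J$ alone (getting $2J\preceq I$) rather than to the combined matrix, and that you spell out the boundary truncation that the paper leaves to the standard convex-optimization guarantees.
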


\begin{remark}
Note that in the literature (e.g. \cite{dobrushin-uniqueness}), the Dobrushin uniqueness criterion is stated as $\sum_j |J_{i,j}| \leq 1$. This corresponds to the above condition $\sum_j 2|J_{i,j}| \leq 1$ in our normalization, since we do not insert a factor of ${1/2}$ in front of the quadratic term in the definition of (variational) free energy.
\end{remark}
% Once again added a claim (about curie-weiss) which is false...
% Is there any interesting example using this assumption??
%\begin{remark}
%The assumption $2J \preceq I_{n \times n}$ also
%suffices for Theorem~\ref{thm:high-temperature-convex}; this holds
%for example in the case of the anti-ferromagnetic Curie-Weiss model (any $\beta < 0$, see \cref{example:curie-weiss})
%which is outside the regime of Dobrushin uniqueness.
%\end{remark}
A well known heuristic for finding the optimal mean-field approximation (see, e.g., the discussion in \cite{wainwright-jordan-variational})
% other possible citations: original paper of Peterson & Anderson, 1999 survey of jordan et al
consists of iterating the \emph{mean-field equations} to search for a fixed point. The mean field equations are just the first-order optimality conditions for $\mathcal{F}^*$:
\begin{equation}\label{eqn:mean-field-equations}
x^* = \tanh^{\otimes n}(2Jx^* + h).
\end{equation}
In the Dobrushin uniqueness regime,
% unsure the footnote i added is actually true (don't think modifying the pf works).
%\footnote{The Dobrushin uniqueness condition may also refer to a slightly weaker assumption $\sum_j \tanh(2|J_{ij}|) \le 1 - \eta$; Theorem~\ref{thm:message-passing} holds also in this setting, which includes all cases in which Theorem~\ref{thm:high-temperature-convex}, as stated, applies. On the other hand, it is easy to see Theorem~\ref{thm:high-temperature-convex} generalizes to the case $J \preceq (1/2)I_{n \times n}$.}
we prove that this message passing algorithm in fact converges exponentially fast to the optimum of the variational free energy.
\begin{theorem}\label{thm:message-passing}
Suppose $J$ is the interaction matrix of an Ising model with
arbitrary external field $h_i$ at vertex $i$, and suppose that
for every row $i$ of $J$, we have $\sum_{j} 2|J_{ij}| \le 1 - \eta$ 
for some uniform $\eta > 0$.
Let $x^*$ be the optimizer of the optimization problem given by $\mathcal{F}^*$.
Let $x_0$ be an arbitrary point in $[-1,1]^n$ and iteratively define
\[ x_n := \tanh^{\otimes n}(2 J x_{n - 1} + h). \]
Then,
\[ \|x_n - x^*\|_{\infty} \le (1 - \eta)^n \|x_0 - x^*\|_{\infty} \le 2 (1 - \eta)^n. \]
\end{theorem}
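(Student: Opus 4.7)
The plan is to show that the update map $T \colon [-1,1]^n \to [-1,1]^n$ defined by $T(x) := \tanh^{\otimes n}(2Jx + h)$ is a strict contraction in the $\ell_\infty$ norm, with Lipschitz constant at most $1-\eta$, and then apply this contraction bound iteratively starting from the fixed point $x^*$.

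First, I would recall that $x^*$ is a fixed point of $T$: this is exactly the mean-field equation \cref{eqn:mean-field-equations}, which is the first-order optimality condition for the (concave, by \cref{thm:high-temperature-convex}) variational problem defining $\mathcal{F}^*$. Since $\tanh(\cdot)$ is a bounded function, the image of $T$ lies in $[-1,1]^n$, so the iteration $x_n = T(x_{n-1})$ is well-defined and remains in $[-1,1]^n$ for every $n \ge 0$.

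Next, I would establish the key contraction estimate. For any $x,y \in [-1,1]^n$, using that $\tanh$ is $1$-Lipschitz on $\mathbb{R}$, we have
\begin{align*}
\|T(x) - T(y)\|_\infty &= \max_i \bigl|\tanh((2Jx + h)_i) - \tanh((2Jy + h)_i)\bigr| \\
&\le \max_i \bigl|(2J(x-y))_i\bigr| \\
&\le \max_i \sum_j 2|J_{ij}|\,|x_j - y_j| \\
&\le \left(\max_i \sum_j 2|J_{ij}|\right) \|x-y\|_\infty \\
&\le (1-\eta) \|x-y\|_\infty,
\end{align*}
where the last inequality uses the assumed Dobrushin-type bound. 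Setting $y = x^*$ and using $T(x^*) = x^*$ yields $\|T(x) - x^*\|_\infty \le (1-\eta)\|x - x^*\|_\infty$, and iterating this $n$ times gives the first claimed bound $\|x_n - x^*\|_\infty \le (1-\eta)^n \|x_0 - x^*\|_\infty$. The second bound $\|x_0 - x^*\|_\infty \le 2$ follows trivially from the fact that both $x_0$ and $x^*$ lie in $[-1,1]^n$, so each coordinate difference has magnitude at most $2$.

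There is no real obstacle here beyond setting up the contraction correctly. The only minor subtlety worth flagging is the Lipschitz constant of $\tanh$, namely that $\tanh'(u) = 1 - \tanh^2(u) \le 1$ uniformly, which is why the factor of $2$ in $\sum_j 2|J_{ij}|$ matches exactly with the derivative appearing through the chain rule in the definition of $T$. As a byproduct, $T$ is a contraction on the complete metric space $([-1,1]^n, \|\cdot\|_\infty)$, so Banach's fixed point theorem also gives uniqueness of $x^*$, independent of the concavity argument.
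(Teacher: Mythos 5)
Your proof is correct and follows essentially the same route as the paper: both establish that $x \mapsto \tanh^{\otimes n}(2Jx+h)$ is an $\ell_\infty$-contraction with constant $1-\eta$ via the $1$-Lipschitzness of $\tanh$ and the row-sum bound, then iterate using that $x^*$ is a fixed point of the mean-field equations. The extra observations (well-definedness of the iteration, the trivial bound $\|x_0 - x^*\|_\infty \le 2$, uniqueness via Banach) are fine and do not change the argument.
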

\begin{remark}\label{rmk:exp-slow}
The high-temperature assumption is necessary for this algorithm to converge
quickly to the optimum. In the super-critical Curie Weiss model without external field (\cref{example:curie-weiss} with $\beta > 1$ and $h = 0$), we see that
$x = (0,\ldots,0)$ is a critical point for the variational free energy (fixed point of the mean-field equations) but not the global optimum. Furthermore,
even if we start from the point $(\epsilon, \ldots, \epsilon)$ for $\epsilon$ a small positive number, we see that for $\beta$ large, iterating the mean field equations converges exponentially slowly in $\beta$ as $\tanh'(\beta)$ is exponentially small in $\beta$. 
\iffalse % I think this is false.
Similarly, we see that near the optimum the gradients can be exponentially large in $\beta$, so for gradient descent with constant step sizes, we likely need to choose a step size which is exponentially small. \fnote{Check details?}
\fi
\end{remark}
% something like this is true? but this is not true.
%\begin{remark}
%Theorem~\ref{thm:message-passing} generalizes to arbitrary Markov random fields under the assumption that for every vertex $u \in [n]$, $\sum_{\alpha \ni u} |J_{\alpha}| \le 1 - \eta$. %In this case, instead of relying on convexity, we use the Banach fixed point theorem to ensure the uniqueness of $x^*$.
%\end{remark}
Even though there exist such situations where the optimization problem
defining the variational free energy is \emph{non-convex} and the message passing algorithm may fail or converge exponentially slowly (see \cref{rmk:exp-slow}),
there is a way to solve the optimization problem in polynomial time as long
as the model is ferromagnetic.
\begin{theorem}\label{thm:ferromagnetic-algorithm}
Fix an Ising model $J$ %with interaction matrix $J$ 
on $n$ vertices which is 
ferromagnetic (i.e. $J_{ij} \ge 0$ for every $i,j$) and has uniform external field $h$ at every node.
There is a randomized algorithm which runs in time $poly(n,1/\epsilon,\log(1/\delta))$ and succeeds with probability at least $1 - \delta$ in solving the optimization problem defining $\mathcal{F}^*$ up to $\epsilon$-additive error.
\end{theorem}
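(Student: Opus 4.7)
The approach exploits the coordinatewise monotonicity that ferromagneticity confers on the mean-field operator $T(x)_i := \tanh(2(Jx)_i + h)$: since $J \ge 0$ entrywise and $\tanh$ is increasing, $T$ preserves the coordinatewise partial order on $[-1,1]^n$. Fixed points of $T$ coincide with critical points of the variational objective
\[ f(x) := x^T J x + h\sum_i x_i + \sum_i H\!\left(\tfrac{x_i+1}{2}\right), \]
and because $J_{ii}=0$ the function $f$ is strictly concave in each coordinate separately, so $T$ is exactly the simultaneous coordinate-wise best-response map. The plan is to compute two canonical fixed points of $T$ and argue that $\mathcal F^*$ is attained at one of them.

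First, I would identify the candidate optimizers: the greatest fixed point $x^+$, obtained as the monotone limit of $T^k(\mathbf 1)$ (a coordinatewise decreasing sequence by monotonicity of $T$ and since $T(\mathbf 1) \le \mathbf 1$), and symmetrically the least fixed point $x^-$ obtained by iterating from $-\mathbf 1$. For the polynomial-time realization---naive Jacobi iteration may converge exponentially slowly at low temperatures---I would combine monotonicity with the Jerrum--Sinclair polynomial-time approximate sampler for ferromagnetic Ising models: $x^\pm$ can be tracked via marginals of suitably biased Boltzmann measures, estimated from $\mathrm{poly}(n,1/\epsilon)$ samples by Hoeffding concentration. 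The randomness and the $\log(1/\delta)$ factor come from the sampler and standard boosting.

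Second, and this is the structural crux, I would prove $\mathcal F^* = \max(f(x^+), f(x^-))$. The main tool is sequential (Gauss--Seidel) coordinate ascent on $f$, which strictly increases $f$ and, under ferromagneticity, preserves the coordinatewise partial order. Given any critical point $y$ strictly between $x^-$ and $x^+$, perturb $y$ slightly upward and run sequential ascent: monotonicity keeps the iterates in the upper order interval $\{x : x \ge y\}$, and by the extremality of $x^+$ the limit must be $x^+$, yielding $f(x^+) \ge f(y)$. A symmetric argument downward gives $f(x^-) \ge f(y)$. The uniform external field is essential here to guarantee a coherent ``two-phase'' lattice structure of fixed points rather than a patchwork of local extrema tied to different subgraphs. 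Evaluation of $f$ at the estimated $x^\pm$ is then a direct polynomial-time computation, yielding the required $\epsilon$-additive approximation to $\mathcal F^*$.

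The hardest step is the second: rigorously showing that the global maximum of the non-concave $f$ lies at one of the extreme fixed points using only ferromagneticity and uniform external field. Care is required because a global maximizer is itself a local maximum, so the perturbation argument cannot directly produce a strict increase in $f$ from an infinitesimal upward shift; it must instead exploit the coordinate-wise concavity of $f$ in conjunction with the monotone lattice structure of critical points to rule out intermediate local maxima. This is also where both hypotheses genuinely enter: non-uniform fields or antiferromagnetic couplings can create additional locally optimal fixed points that the monotone extremes $x^\pm$ do not dominate.
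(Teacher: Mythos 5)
Your proposal has two genuine gaps, and it misses the key idea of the paper's proof. First, the structural crux---that the global maximum of $f$ is attained at the extremal fixed point $x^+$ or $x^-$---is left unproved, and your own perturbation argument does not close it: sequential coordinate ascent started from a small upward perturbation of an intermediate local maximizer $y$ will generically flow back to $y$ (or to some other fixed point in $[y, x^+]$), not to $x^+$, so you never obtain $f(x^+)\ge f(y)$. It is not clear that the claim is even true as stated; writing $f$ at a fixed point as $-x^TJx+\sum_i\log 2\cosh(2(Jx)_i+h)$ shows that moving up the fixed-point lattice trades an increasing $\log\cosh$ term against a decreasing $-x^TJx$ term, so monotonicity of $f$ along the lattice does not follow from ferromagneticity alone. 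Second, the algorithmic step is also unjustified: you correctly note that monotone iteration of $T$ can be exponentially slow, but the proposed fix---reading off $x^\pm$ from marginals of (biased) Boltzmann measures via Jerrum--Sinclair---conflates the true Gibbs marginals with the mean-field fixed point. These generally differ (that discrepancy is exactly what \cref{thm-main-structural-result} quantifies), so estimating Gibbs marginals does not produce $x^\pm$.

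The paper's proof avoids both issues with a different device: it forms the $m$-blow-up $J_m$ (each vertex replaced by $m$ copies, edge weights divided by $m$, same uniform field), samples from the Boltzmann distribution of $J_m$ using Jerrum--Sinclair, and observes that the induced distribution of the normalized net-spin vector $y/m$ is proportional to $\exp(m\,f(y/m)\pm O(n\log m))$. Taking $m=\Omega(n\log(n)/\epsilon)$ makes the $e^{-m\epsilon}$ decay beat the $m^n$ union bound over net-spin vectors, so a single sample lands on an $\epsilon$-approximate maximizer of $f$ with constant probability; no claim about which fixed point is optimal is ever needed. If you want to salvage your route, you would have to supply a genuinely new argument for the lattice-extremality of the optimizer and a polynomial-time method for computing $x^\pm$; as written, neither step stands.
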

However, in the general case, we show that it is NP-hard to estimate
the variational free energy. In fact,
it is NP-hard to return an estimate to the 
free energy within additive error $n^{1 - \delta} \|J\|_F^{1 - \delta}$, 
whereas by \cref{thm-main-structural-result} and \cref{thm-mrf-main-structural-result}, the true variational free energy is much closer than this. %, by Theorem~\ref{thm-main-structural-result} (and corresponding Theorem~\ref{thm-main-structural-result} for MRFs).
\begin{theorem}\label{thm:free-energy-hardness}
For any fixed $\delta > 0$, it is NP-hard to approximate the free energy
$\mathcal{F}$ (or variational free energy $\mathcal{F}^*$) of an Ising model $J$ within an additive error of $n^{1 - \delta} \|J\|_F^{1 - \delta}$. More generally, for
an $r$-uniform Markov Random Field, it is NP-hard to approximate $\mathcal{F}$ within
an additive error of $(n^{r/2} \|J_{=r}\|_F)^{1 - \delta}$.
\end{theorem}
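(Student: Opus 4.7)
\emph{Plan.} The plan is to reduce from NP-hard maximization problems on the Boolean hypercube (MAX-CUT for the Ising case, MAX-$r$-SAT for the degree-$r$ MRF case), exploiting the zero-temperature limit of the free energy. The starting point is the elementary sandwich
\[ \max_{x \in \{\pm 1\}^n} J(x) \;\le\; \F^*(J) \;\le\; \F(J) \;\le\; \max_{x \in \{\pm 1\}^n} J(x) + n \log 2, \]
valid for any polynomial $J$: the first inequality comes from evaluating $\F^*$ at the point mass on a maximizer of $J$ (whose entropy is $0$), the middle is the variational principle, and the last follows by bounding the partition function by $2^n \exp(\max J)$. Consequently any additive $E$-approximation to $\F$ (or $\F^*$) immediately yields an additive $(E + n\log 2)$-approximation to $\max_x J(x)$.

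\emph{Amplification by inverse temperature.} For the Ising statement, given a MAX-CUT instance on a graph $G$ with adjacency matrix $A$ and $m$ edges, I would take $J := -\beta A$ for a large integer $\beta$ to be chosen. Then $\max_x x^T J x = \beta\,(4\,\text{MaxCut}(G) - 2m)$ is an integer multiple of $\beta$, so knowing this quantity within $\beta/2$ determines $\text{MaxCut}(G)$ exactly. A hypothetical $n^{1-\delta}\|J\|_F^{1-\delta}$-additive approximation to $\F$, combined with the sandwich above, pins down $\max_x x^T J x$ up to additive error
\[ n^{1-\delta}(\beta \sqrt{2m})^{1-\delta} + n\log 2 \;\le\; \beta^{1-\delta}\, n^{2(1-\delta)} + O(n). \]
Setting $\beta := n^K$ with $K > 2(1-\delta)/\delta$ forces $(K+2)(1-\delta) < K$ and hence makes this error $o(\beta)$, in particular smaller than $\beta/2$ for all sufficiently large $n$; we then recover $\text{MaxCut}(G)$ by rounding. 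Since $\beta = n^K$ is encoded in $O(\log n)$ bits, the Ising instance has polynomial bit-size, so this is a genuine polynomial-time reduction yielding NP-hardness of approximation.

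\emph{Higher-order MRFs and the main obstacle.} The MRF claim is proved by the same template, using NP-hardness of MAX-$r$-SAT (for every $r \ge 2$). Each width-$r$ clause, after the $\pm 1$ encoding and after clearing the dyadic $2^r$ denominator, becomes a degree-$r$ polynomial with $O(1)$ integer Fourier coefficients; summing over $M$ clauses produces a polynomial $J$ that is integer-valued on $\{\pm 1\}^n$ and has $\|J_{=r}\|_F = O(\sqrt{M}) = O(n^{r/2})$. Scaling by $\beta = n^K$ and reapplying the sandwich, the allowed slack $(n^{r/2}\|\beta J_{=r}\|_F)^{1-\delta} = \beta^{1-\delta}(n^{r/2}\|J_{=r}\|_F)^{1-\delta}$ becomes $o(\beta)$ once $K = K(r,\delta)$ is chosen sufficiently large. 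The main ``obstacle'' is really just the bookkeeping that verifies $K$ can be chosen as a function of $\delta$ (and $r$) alone and that the resulting instance has polynomial bit-complexity; notably, no PCP or gap-producing reduction is required, because the ``$1-\delta$'' exponent in the theorem comes purely from the inverse-temperature amplification rather than from any inherent approximation gap in the source combinatorial problem.
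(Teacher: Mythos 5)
Your reduction is correct, but it is genuinely different from the one in the paper. The paper keeps the interaction weights bounded (it sets $J_{ij} = 1/2 - y_{\{i,j\}}$, so $\|J\|_F = \Theta(n)$) and imports the $(1-\delta)$ exponent directly from the Ailon--Alon theorem that fully dense MAX-$r$-LIN-2 is NP-hard to approximate within \emph{additive} error $n^{r-\epsilon}$; the sandwich $\mathcal{F} = \mathrm{OPT} \pm n$ then transfers that additive gap to the free energy with no rescaling. You instead start from exact NP-hardness of MAX-CUT and manufacture the $(1-\delta)$ exponent by an inverse-temperature blowup $\beta = n^K$, using the fact that the allowed error scales like $\beta^{1-\delta}$ while the integrality gap of the optimum scales like $\beta$; your arithmetic ($K > 2(1-\delta)/\delta$, polynomial bit-size) checks out, and one may assume $\delta < 1/2$ without loss since smaller $\delta$ is the stronger claim. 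Each approach buys something: yours needs no dense-CSP inapproximability machinery and is entirely elementary, while the paper's proof shows hardness already for instances with $O(1)$-bounded entries and $\|J\|_F = \Theta(n)$ -- i.e., exactly the dense regime that the paper's positive results (Theorems \ref{thm-main-structural-result} and \ref{thm:regularity-alg}) address -- whereas your hard instances have entries of magnitude $n^K$ and would evaporate under any bounded-weight restriction. One small repair is needed in your MRF extension: a width-$r$ SAT clause expands into Fourier terms of \emph{all} degrees $\le r$, so the resulting $J$ is not $r$-uniform as the theorem requires; reducing from exact MAX-$r$-LIN-2 (i.e.\ $r$-XOR, NP-hard for all $r \ge 2$) instead yields a purely degree-$r$ polynomial, after which your amplification argument goes through verbatim.
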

We now give an algorithm to approximate
the free energy in the most general setting;
in light of the NP-hardness result (\cref{thm:free-energy-hardness})
this approximation must be roughly on the scale of $n \|J\|_F$.
In the general setting, the only previous algorithm which gives non-trivial guarantees for approximating the log-partition function is that of Risteski \cite{risteski-ising}, which requires time $n^{O(1/\epsilon^2)}$ as well as stronger density assumptions in order to provide guarantee similar to \cref{thm:regularity-alg}. In comparison, the algorithm we give has the advantage that it runs in \emph{constant-time} for fixed $\epsilon$.
\begin{theorem}\label{thm:regularity-alg}
Fix $\epsilon > 0$. There is an algorithm which runs in time $2^{O(\log(1/\epsilon)/\epsilon^2)}$ and
returns, with probability at least $0.99$, an implicit description of a product distribution $\mu$
and estimate to the free energy $\hat{\mathcal{F}}$
such that
\[ \KL(\mu || P) \le \epsilon n \|J\|_F  + C\log(1/\epsilon)/\epsilon^2 + 0.5^{2^{1/\epsilon^2}} n \]
and
%\fnote{note we had to be a little careful here: we can't actually compute above KL in constant time on the true graph. add any missing necessary details for below bound to proof}
\[ |\mathcal{F} - \hat{\mathcal{F}}| \le \epsilon n \|J\|_F  + C'\log(1/\epsilon)/\epsilon^2 + 0.5^{2^{1/\epsilon^2}} n, \]
%\[ \hat{\mathcal{F}} = -\left[\sum_{i,j} J_{ij} \E_{\mu}[X_i X_j] + H(\mu)\right] \]
where $C$ and $C'$ are absolute constants. 
\end{theorem}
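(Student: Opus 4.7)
The plan is to combine the structural bound of \cref{thm-main-structural-result}, a sampling version of the Frieze--Kannan weak regularity lemma, and a discretized grid search over the mean-field candidates induced by the regularity decomposition. Since \cref{thm-main-structural-result} already guarantees that the optimal product distribution $\nu$ satisfies $\KL(\nu \| P) \leq \tilde O(n^{2/3} \|J\|_F^{2/3})$, which is dominated by $\epsilon n \|J\|_F + C \log(1/\epsilon)/\epsilon^2$, it suffices to find some product distribution $\mu$ whose variational objective comes within the targeted additive slack of $\mathcal{F}^*$, and to return $\hat{\mathcal{F}}$ equal to the variational objective value of $\mu$.

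First I would approximate $J$ by a cut-matrix sum $\tilde J = \sum_{k=1}^{K} \lambda_k \bone_{A_k} \bone_{B_k}^T$ with $K = O(1/\epsilon^2)$ terms and $J - \tilde J$ of cut norm at most $O(\epsilon n \|J\|_F)$. For any $x \in [-1,1]^n$ the quadratic forms $x^T J x$ and $x^T \tilde J x$ differ by at most this cut-norm bound, so the variational objectives for $J$ and $\tilde J$ agree pointwise up to $O(\epsilon n \|J\|_F)$. The decomposition is produced in constant-time sampling fashion: iteratively draw $O(1/\epsilon^2)$ random indices and choose each pair $A_k, B_k$ as a level set of a residual matrix restricted to these sampled rows and columns, yielding a running time of $2^{O(1/\epsilon^2)}$ and an implicit representation of the $A_k$'s and $B_k$'s via the sampled indices and grid-rounded coefficients $\lambda_k$.

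The key structural observation is that the Hamiltonian of $\tilde J$ depends on $x$ only through the $2K$ linear statistics $\langle \bone_{A_k}, x \rangle$ and $\langle \bone_{B_k}, x \rangle$. The $n$ vertices thus partition into at most $2^{2K} = 2^{O(1/\epsilon^2)}$ \emph{types} defined by set membership, and by the permutation symmetry of the variational objective within each type, the optimum for $\tilde J$ is attained at a product distribution with constant magnetization on each type. The mean-field optimization therefore reduces to choosing a single scalar in $[-1,1]$ per type. Discretizing $[-1,1]$ at mesh $\Theta(\epsilon/K)$ yields $(O(K/\epsilon))^{2^{O(1/\epsilon^2)}} = 2^{O(\log(1/\epsilon)/\epsilon^2)}$ candidates, each scored in closed form from the type weights $|V_t|/n$ and the coefficients $\lambda_k$; the best candidate defines $\mu$ (implicitly, as a constant magnetization per type), and its objective value is returned as $\hat{\mathcal{F}}$.

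The main obstacle is juggling three simultaneous approximation errors. Standard Hoeffding-type concentration for the sampling-based $\lambda_k$'s contributes an $O(\epsilon n \|J\|_F)$ error to the variational objective, and Lipschitz dependence of the objective on the per-type magnetizations bounds the grid-discretization cost by $O(\log(1/\epsilon)/\epsilon^2)$. The delicate term is the estimation of the type weights $|V_t|/n$: by drawing $s = 2^{1/\epsilon^2}$ additional samples, one shows that every type of relative size at least $\tfrac12$ is hit except with probability $\leq 0.5^{s}$, while the contribution of undetected types to the objective is crudely bounded by $n$, producing the residual $0.5^{2^{1/\epsilon^2}} n$. Assembling the three sources of error bounds $|\mathcal{F} - \hat{\mathcal{F}}|$ within the stated expression, and by \cref{eqn:free-energy-KL} the same quantity bounds $\KL(\mu \| P)$ after absorbing the structural $\tilde O(n^{2/3}\|J\|_F^{2/3})$ slack of \cref{thm-main-structural-result} into the constants $C$ and $C'$.
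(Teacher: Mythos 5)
Your overall architecture matches the paper's: a Frieze--Kannan cut decomposition $\tilde J$ with $K=O(1/\epsilon^2)$ terms, the observation that the energy depends only on the $2K$ linear statistics $\langle \bone_{A_k},x\rangle,\langle \bone_{B_k},x\rangle$, passage to the common refinement into at most $2^{2K}$ types, and the fact that the optimum is constant on each type. However, the final enumeration step contains a fatal arithmetic error. You propose to grid-search one scalar per type at mesh $\Theta(\epsilon/K)$; with $2^{2K}=2^{O(1/\epsilon^2)}$ types this gives $(O(K/\epsilon))^{2^{O(1/\epsilon^2)}} = 2^{O(\log(1/\epsilon))\cdot 2^{O(1/\epsilon^2)}}$ candidates, which is \emph{doubly} exponential in $1/\epsilon^2$, not $2^{O(\log(1/\epsilon)/\epsilon^2)}$ as you claim. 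A brute-force search over per-type magnetizations cannot meet the stated running time.

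The missing idea is to separate the energy from the entropy. The energy of $\tilde J$ is determined by the $2K=O(1/\epsilon^2)$ linear statistics alone, so one enumerates only over a discretization of \emph{those} (the grid $I_\gamma^{2s}$ in the paper), giving $(O(1/\gamma))^{2K}=2^{O(\log(1/\epsilon)/\epsilon^2)}$ cells. Within each cell the energy is essentially constant, and the remaining problem --- maximize $\sum_i H((1+x_i)/2)$ subject to the linear statistics lying in the cell --- is a \emph{concave} maximization, which after collapsing to one variable per type becomes a convex program in $2^{O(1/\epsilon^2)}$ variables solvable by the ellipsoid method in time $2^{O(1/\epsilon^2)}$ to additive precision $2^{-2^{1/\epsilon^2}}$ (this is where the $0.5^{2^{1/\epsilon^2}}n$ term actually comes from, not from missing small types in sampling; your derivation of that term via "every type of relative size at least $\tfrac12$ is hit" is also not coherent as stated, since at most two types can have relative size $\ge \tfrac12$). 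The product of the two costs is $2^{O(\log(1/\epsilon)/\epsilon^2)}$, as required. Without replacing the per-type grid search by this enumerate-statistics-then-solve-a-convex-program structure, the proof does not establish the theorem.
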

\begin{remark}
Typically, the first term in the bound of \cref{thm:regularity-alg} dominates. In particular, the last term $0.5^{2^{1/\epsilon^2}} n$ is dominated by the first term except in a very unusual regime where $\|J\|_F$ is very small i.e. the interactions in our model are extremely weak, and even then, it vanishes doubly-exponentially fast as we take $\epsilon \to 0$.
\end{remark}

Our algorithm extends in a straightforward way to general order $r$ Markov random fields as well. 

\begin{theorem}
\label{thm:algo-mrf-regularity-bad-dependence}
Fix $r \geq 3$. Then, there exists a constant $C=C(r)$  such that for any order $r$ Markov random field $J$ with Boltzmann distribution $P$ and free energy $\F$, and for any $\epsilon >0$, there is an algorithm which runs in time $2^{O(\log(1/\epsilon)/\epsilon^{2r-2})}$ and
returns, with probability at least $0.99$, an implicit description of a product distribution $\mu$
and estimate to the free energy $\hat{\mathcal{F}}$
such that
\[ \KL(\mu || P) \le \max_{1\leq d\leq r} \epsilon n^{d/2} \|J_{=d}\|_F  + C\log(1/\epsilon)/\epsilon^{2d-2} + 0.5^{2^{1/\epsilon^{2d-2} }} n \]
and
%\fnote{note we had to be a little careful here: we can't actually compute above KL in constant time on the true graph. add any missing necessary details for below bound to proof}
\[ |\mathcal{F} - \hat{\mathcal{F}}| \le \max_{1\leq d\leq r} \epsilon n^{d/2} \|J_{=d}\|_F  + C\log(1/\epsilon)/\epsilon^{2d-2} + 0.5^{2^{1/\epsilon^{2d-2} }} n. \]
%\[ \hat{\mathcal{F}} = -\left[\sum_{i,j} J_{ij} \E_{\mu}[X_i X_j] + H(\mu)\right] \]
\end{theorem}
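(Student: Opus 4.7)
}
The plan is to mimic the structure of the algorithm for \cref{thm:regularity-alg} (the Ising case), replacing the second-order tools with their natural order-$r$ analogues. The first step is to split $J$ by degree: write $J(x) = \sum_{d=1}^{r} J_{=d}(x)$, so that the free energy contributions decouple in an additive way up to an $O(r)$ loss. By triangle inequality on both $\KL$ and on $|\F - \widehat{\F}|$, it suffices to produce, for each $d$, an approximation accurate up to $\epsilon\, n^{d/2}\|J_{=d}\|_F + C \log(1/\epsilon)/\epsilon^{2d-2} + 0.5^{2^{1/\epsilon^{2d-2}}} n$, and then to combine them by taking the maximum over $d$.

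For a fixed $d$, I would apply a Frieze--Kannan-style weak regularity lemma for $d$-tensors (this is the higher-order analogue of the cut-norm decomposition used for the matrix case in \cref{thm:regularity-alg}): decompose
\[
  J_{=d} \;=\; \sum_{k=1}^{K_d} c_k\, L_{k,1}\otimes L_{k,2}\otimes \cdots \otimes L_{k,d} \;+\; E_d,
\]
where each $L_{k,j}$ is a $\{\pm 1\}$-valued (or bounded) function of a single coordinate block, $K_d = O(\epsilon^{-(2d-2)})$, the scalars satisfy $\sum_k |c_k| \lesssim \|J_{=d}\|_F$, and $E_d$ has small $d$-dimensional cut norm relative to $n^{d/2}\|J_{=d}\|_F$. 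The $(2d-2)$-exponent arises here exactly as in the tensor version of Frieze--Kannan and is what drives the $\epsilon^{-(2r-2)}$ overall runtime. The key inequality I then need, in the spirit of the Ising argument, is that for any product distribution $\mu$ the tensor $E_d$ contributes at most $\epsilon\, n^{d/2}\|J_{=d}\|_F$ to $\E_\mu[E_d(X)]$; this follows because moments of degree-$d$ multilinear forms under a product measure are controlled by the corresponding cut norm (equivalently, by duality between the cut norm and the class of rank-one sign tensors, which product measures approximate).

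Given the decomposition, the optimization problem
\[
  \F^{*} \;=\; \max_{x\in[-1,1]^n}\Bigl[\sum_{d,k} c_k \prod_{j=1}^{d} L_{k,j}(x) + \sum_i H\bigl((x_i+1)/2\bigr)\Bigr]
\]
depends on the vector $x$ only through the $\sum_d K_d = O(\epsilon^{-(2r-2)})$ real numbers $\bigl(L_{k,j}(x)\bigr)_{k,j}$. I would enumerate an $\epsilon$-net of values for each $L_{k,j}(x^{*})$ in its bounded range; this produces $(1/\epsilon)^{O(\epsilon^{-(2r-2)})} = 2^{O(\log(1/\epsilon)/\epsilon^{2r-2})}$ guesses. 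Conditional on each guess, the objective decouples across coordinates into a sum of one-dimensional concave problems (a linear term in $x_i$ plus binary entropy), each with a closed-form optimum of the tanh form. I then retain the guess with the largest objective, which by construction approximates $\F^{*}$ within the claimed additive error; converting $\F^{*}$ to $\F$ costs at most the structural bound from \cref{thm-mrf-main-structural-result}, which is absorbed into the $C\log(1/\epsilon)/\epsilon^{2d-2}$ term after choosing $\epsilon$ in the regularity lemma accordingly. The product distribution $\mu$ whose marginals match the optimal one-dimensional solutions then satisfies the claimed $\KL$ bound by a parallel calculation using $\F - \F^{*} = \KL(\nu\|P)$.

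The main obstacle, as in the Ising case, is the interplay between the approximation error of the tensor regularity decomposition and the enumeration complexity: one must choose the cut-norm tolerance small enough that $\E_\mu[E_d]$ is controlled by $\epsilon\, n^{d/2}\|J_{=d}\|_F$ for every product $\mu$ simultaneously, while keeping $K_d$ small enough that the enumeration is tractable. The doubly-exponential tail term $0.5^{2^{1/\epsilon^{2d-2}}} n$ is a residual from the iterative construction of the regularity decomposition (each refinement step squares a small quantity), and just needs to be tracked through the induction. Everything else — the triangle inequality in $\KL$, the reduction of the low-complexity optimization to independent one-dimensional problems, and the translation between $\F$ and $\F^{*}$ — is essentially the same as in the Ising argument, now indexed over $d\in\{1,\dots,r\}$ and with the $r$-dependent constants absorbed into $C=C(r)$.
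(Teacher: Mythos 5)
Your overall strategy matches the paper's: the paper proves this theorem by running the proof of \cref{thm:regularity-alg} verbatim with the matrix weak regularity lemma replaced by the Frieze--Kannan algorithmic regularity lemma for $k$-dimensional arrays (width $O(\epsilon^{2-2k})$, which is the source of the $\epsilon^{-(2r-2)}$ in the runtime), applied to each homogeneous part $J_{=d}$. Your tensor decomposition and error accounting are the right higher-order analogues of that.

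However, there is a genuine gap in the step where you solve the low-complexity optimization. After guessing the values $\ell_{k,j}$ of the linear forms $L_{k,j}(x^*)$, the remaining problem is \emph{not} an unconstrained sum of one-dimensional concave problems with closed-form tanh optima: the energy term becomes a constant determined by the guess, and what remains is to maximize $\sum_i H((1+x_i)/2)$ \emph{subject to the constraints} that each $L_{k,j}(x)$ actually equals (up to the discretization tolerance) the guessed value $\ell_{k,j}$. These constraints couple all coordinates in the support of each $L_{k,j}$. If you drop them and optimize each coordinate separately, the returned $x$ need not realize the guessed energy, so taking the maximum objective over guesses can overshoot $\F^*$ by an uncontrolled amount (you could pair a combinatorially infeasible, high-energy guess with the all-zeros, maximum-entropy point). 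The paper's proof enforces consistency by solving, for each guess, a constrained max-entropy convex program $\mathcal{C}_{r,c,\gamma}$ and discarding infeasible guesses; by concavity of entropy the optimum is constant on each atom of the common refinement of the cut sets, which reduces the program to $A \le 2^{2s}$ variables and lets the ellipsoid method solve it in time independent of $n$ --- this reduction is also needed to justify your claimed runtime, since a program in $n$ variables would cost $\mathrm{poly}(n)$. Relatedly, the term $0.5^{2^{1/\epsilon^{2d-2}}}n$ is the additive error of the ellipsoid method on that constant-size program, not a residual of the iterative regularity construction as you suggest. Finally, be careful with your opening reduction: the free energies of the degree-$d$ pieces do not add (the partition function of a sum of Hamiltonians is not a product); what adds is the $\|\cdot\|_{\infty\mapsto 1}$ error incurred by replacing each $J_{=d}$ with its cut decomposition, after which one works with the single combined model.
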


In the previous theorem, it is possible to improve the dependence on $\epsilon$ at the expense of introducing a factor of $n^r$ in the running time. 
\begin{theorem}
\label{thm:algo-mrf-dependence-on-n}
Fix $r \geq 3$. Then, there exists a constant $C=C(r)$ such that for any order $r$ Markov random field $J$ with Boltzmann distribution $P$ and free energy $\F$, and for any $\epsilon > 0$, there is an algorithm which runs in time $2^{O(\log(1/\epsilon)/\epsilon^2)}n^{r}$ and
returns, with probability at least $0.99$, an implicit description of a product distribution $\mu$
and estimate to the free energy $\hat{\mathcal{F}}$
such that
\[ \KL(\mu || P) \le \epsilon \max_{1\leq d\leq r}n^{d/2} \|J_{=d}\|_F  + C\log(1/\epsilon)/\epsilon^2 + 0.5^{2^{1/\epsilon^2}} n \]
and
%\fnote{note we had to be a little careful here: we can't actually compute above KL in constant time on the true graph. add any missing necessary details for below bound to proof}
\[ |\mathcal{F} - \hat{\mathcal{F}}| \le \epsilon \max_{1\leq d\leq r}n^{d/2} \|J_{=d}\|_F  + C\log(1/\epsilon)/\epsilon^2 + 0.5^{2^{1/\epsilon^2}} n. \]
%\[ \hat{\mathcal{F}} = -\left[\sum_{i,j} J_{ij} \E_{\mu}[X_i X_j] + H(\mu)\right] \]
\end{theorem}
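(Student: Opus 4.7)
The plan is to extend the strategy behind \cref{thm:regularity-alg} from the Ising (quadratic) case to a general order-$r$ Markov random field while preserving the $\eps^{-2}$ dependence in the exponent of the running time, at the price of an additional $n^r$ factor. The key is to avoid direct use of a tensor weak regularity lemma (which would force the worse $\eps^{-(2r-2)}$ exponent seen in \cref{thm:algo-mrf-regularity-bad-dependence}) and instead to invoke the matrix Frieze--Kannan weak regularity lemma after flattening each homogeneous piece $J_{=d}$ into a matrix.

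First, decompose $J = \sum_{d=1}^{r} J_{=d}$ and, for each $d$, flatten $J_{=d}$ into a matrix $M_d$ of dimensions $n \times n^{d-1}$ by singling out one coordinate, so that $J_{=d}(x) = \langle M_d, x \otimes x^{\otimes(d-1)}\rangle$ as a polynomial identity in $x$ and $\|M_d\|_F = \|J_{=d}\|_F$. Because a product distribution $\mu$ with marginal vector $x$ has multilinear moments, $\E_\mu[J_{=d}(X)]$ equals this inner product up to a contribution from diagonal (repeated-index) terms, which is absorbed in the lower-order additive terms in the statement. Writing down all of the $M_d$ explicitly takes time $O(n^r)$, which is precisely the extra factor appearing in the claimed running time.

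Next, apply the matrix weak regularity lemma to each $M_d$ to obtain
\[ M_d = \sum_{i=1}^{k_d} \lambda_i^{(d)}\, u_i^{(d)} \otimes v_i^{(d)} + E_d, \qquad k_d = 2^{O(\log(1/\eps)/\eps^2)}, \]
with $u_i^{(d)} \in \{\pm 1\}^n$, $v_i^{(d)} \in \{\pm 1\}^{n^{d-1}}$, and cut-norm error $|\langle E_d, a \otimes b\rangle| \le \eps \cdot n^{d/2}\|J_{=d}\|_F$ for all $a \in [-1,1]^n$, $b \in [-1,1]^{n^{d-1}}$. Substituting the low-rank pieces into the mean-field objective reduces the problem, up to additive error $\eps \max_{1 \le d \le r} n^{d/2}\|J_{=d}\|_F$, to maximizing
\[ \sum_{d=1}^{r} \sum_{i=1}^{k_d} \lambda_i^{(d)} \langle u_i^{(d)}, x\rangle \langle v_i^{(d)}, x^{\otimes(d-1)}\rangle + \sum_{j=1}^{n} H\!\left(\frac{x_j + 1}{2}\right) \]
over $x \in [-1,1]^n$. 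Following the max-entropy enumeration of the proof of \cref{thm:regularity-alg}, discretize each of the $\sum_d k_d = 2^{O(\log(1/\eps)/\eps^2)}$ statistics on a grid of resolution $O(\eps)$, producing $2^{O(\log(1/\eps)/\eps^2)}$ guesses in total. For each guess, solve the residual concave problem of maximizing the entropy subject to the corresponding equality constraints on $x$; output the best product distribution $\mu$ together with the associated estimate $\hat{\F}$.

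The main technical hurdle is that for $d \ge 3$ the higher-order statistic $\langle v_i^{(d)}, x^{\otimes(d-1)}\rangle$ is a polynomial of degree $d-1$ in $x$ rather than a linear functional, so fixing its value does not directly produce a linear constraint in the max-entropy subproblem. I would address this by iterating the flattening: apply the matrix weak regularity lemma recursively inside each $v_i^{(d)}$, peeling off one coordinate per round, so that after at most $r-1$ rounds every surviving interaction term depends on $x$ only through linear statistics. A cleaner alternative is to jointly guess, on the same grid, all degree-$\le r$ product moments of the target distribution appearing in the decomposition; since the count of such statistics remains $2^{O(\log(1/\eps)/\eps^2)}$, neither the grid size nor the $O(n^r)$ preprocessing time is affected. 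The error analysis then parallels that of \cref{thm:regularity-alg}, with $n\|J\|_F$ replaced throughout by $\max_{1 \le d \le r} n^{d/2}\|J_{=d}\|_F$ and the lower-order additive terms unchanged.
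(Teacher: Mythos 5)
Your proposal correctly identifies the two quantities that must be balanced (width of the decomposition governs the $\eps$-dependence of the running time, and the cost of computing the decomposition contributes the $n^r$ factor), but the route you take to get there has a genuine gap. The paper's proof simply replaces the matrix regularity lemma used in \cref{thm:regularity-alg} by the \emph{tensor} weak regularity lemma of Alon et al.\ (\cref{reg-alon-etal-mrf}): applied to the $d$-dimensional array $J_{=d}$, it produces at most $4/\eps^2$ cut arrays $CUT(S_1,\dots,S_d;d_i)$ with genuine \emph{product} structure, so that each summand $d_i\prod_{j}\bigl(\sum_{a\in S_j} x_a\bigr)$ depends on $x$ only through linear statistics, the max-entropy subproblems remain concave programs with linear constraints, and the width stays $O(1/\eps^2)$; the $n^r$ in the running time is exactly the $O(N)=O(n^r)$ cost of that lemma. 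Your flattening of $J_{=d}$ into an $n\times n^{d-1}$ matrix followed by the matrix Frieze--Kannan lemma instead yields cut matrices $CUT(S,T,d_i)$ with $T$ an \emph{arbitrary} subset of $[n]^{d-1}$, so the factor $\langle \bone_T, x^{\otimes(d-1)}\rangle$ is an uncontrolled degree-$(d-1)$ polynomial in $x$. You flag this hurdle yourself, but neither proposed fix closes it.

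The recursive fix multiplies widths: each round of peeling contributes a factor of $\Theta(1/\eps^2)$ pieces, so after $r-1$ rounds you are back to width $\Theta(\eps^{-(2r-2)})$ -- precisely the bound of \cref{reg-fk-higher} that yields the weaker running time of \cref{thm:algo-mrf-regularity-bad-dependence} -- and in addition the cut-norm errors of the inner decompositions are scaled by the outer coefficients and linear factors, so the total error is no longer controlled by a single application of \cref{lemma:gamma-def}-type bookkeeping. The ``guess all product moments'' fix fails for a different reason: after guessing the value of a degree-$(d-1)$ statistic $\langle v, x^{\otimes(d-1)}\rangle$ on a grid, the residual problem of maximizing $\sum_j H((1+x_j)/2)$ subject to that statistic taking a prescribed value is a maximization over a \emph{non-convex} set for $d\ge 3$, so it cannot be handled by the ellipsoid method as in the proof of \cref{thm:regularity-alg}, and even certifying feasibility of a guess is not tractable. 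To repair the argument you should abandon the flattening and invoke \cref{reg-alon-etal-mrf} directly on each $J_{=d}$, after which the enumeration, the grouping via the analogue of \cref{lemma:gamma-def} (using the coefficient bound $\sum_i|d_i|\le 2\|J_{=d}\|_F/\eps\sqrt{N}$), and the constant-size convex reformulation all go through verbatim as in \cref{thm:regularity-alg}.
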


\subsection{Comparison with previous work}
\label{sec:previous-results}
As mentioned earlier, providing guarantees on the quality of the mean-field approximation for general graphs has attracted much interest in recent years. Notably, in the
context of graphons \cite{borgs2012convergent}, the following result (stated here in our notation\footnote{In their paper, the edge weights are normalized by $1/n$ so that on dense graphs, the limit as $n \to \infty$ will sensibly converge. Their bound is stated for the slightly more general setting of models over finite alphabets -- to facilitate ease of comparison, we have stated it only in the simplest case of binary Ising models with uniform external field $h$.}) was shown:
\[ |\mathcal{F}^*/n - \mathcal{F}/n| \le \frac{48}{n^{1/4}} + \frac{130 n \|\vec J\|_{\infty}}{\sqrt{\log n}} + \frac{5 |h|}{n^{1/2}}. \]
Here, $\|\vec{J}\|_{\infty}$ denotes the absolute value of the largest entry of $J$. 

This result was sufficient for the application 
in \cite{borgs2012convergent}, i.e., proving convergence of the free energy
density and the variational free energy density for sequences of dense graphs (i.e. those with $\Theta(n^2)$
many edges). In this case, it is natural to take $\|\vec J\|_{\infty} = O(1/n)$ and thus,
their error bound converges to 0 at rate $1/\sqrt{\log n}$. %In \cite{borgs2012convergent} 
They used this
bound to prove that defining the free energy density of a graphon in terms
of the variational free energy density is asymptotically consistent with the combinatorial
definition of the free energy in terms of sums over states (which cannot naively be made sense of in the
graphon setting).

The bound in \cite{borgs2012convergent}
has two limitations: first, it does not provide any information about models where $\|\vec{J}\|_{\infty} = \omega(\sqrt{\log n}/n)$ -- a setting which includes
essentially all natural models on graphs with $o(n^2)$ edges --
and secondly, the convergence rate of $1/\sqrt{\log n}$ is very slow -- in order to get $\epsilon$
error in the bound, we must look at graphs of size $2^{1/\epsilon^2}$, which raises
the possibility that the approximation may perform badly even on very large graphs. 

The papers \cite{borgs2014p}, and most recently \cite{basak2017universality}, resolve the first issue by giving bounds which extend to sparser graphs. In our context, the latter result is more relevant, and we refer the reader to the discussion in \cite{basak2017universality}
for the relationship to \cite{borgs2014p}. The main result of \cite{basak2017universality}
is that $|\F^*/n - \F/n| = o(1)$ whenever $\|J\|_F^2 = o(n^2)$. 
%hence the free energy densities $\F^*/n$ and $\F/n$ agree asymptotically. 
As noted by the authors, if we do not
care about the rate of convergence, then this result is tight -- there are simple
examples of models with $\|J\|_F^2 = \Theta(n^2)$ where $|\F^*/n - \F/n| = \Omega(1)$.
%for example Ising models on trees and lattices. 
However, their result is focused
on the asymptotic regime and does not give any control on the rate of convergence.
In contrast, our main result gives an explicit bound on the rate of convergence
which is optimal up to logarithmic factors (\cref{thm:variational-lb}). Moreover, this bound is much better than the one in \cite{borgs2012convergent}, even in regimes where the latter is applicable. For instance, in the setting of dense graphs with edge weights scaled by $1/n$, their bound shows that $|\mathcal{F}^*/n - \mathcal{F}/n|$ converges to $0$ at the rate $O(1/\sqrt{\log n})$, whereas our bound gives the convergence rate $O(\log^{1/3}(n)/n^{1/3})$.

It is interesting to note that both our result 
and \cite{borgs2012convergent} use the Frieze-Kannan weak regularity lemma. However, our analysis introduces a number of new ideas that let us avoid the $2^{1/\epsilon^2}$ dependence which is typical in applications of the weak regularity lemma, thereby obtaining bounds with exponentially better dependence on $n$. Besides giving the best known convergence rate, our result is almost as strong as \cite{basak2017universality} asymptotically and has a much
simpler proof which generalizes easily to higher-order Markov random fields. In contrast, the spectral methods used in \cite{basak2017universality}
may be more difficult to generalize to the case where higher-order tensors become involved.

With respect to \cref{thm:message-passing}, we note that some related ideas have been used in the convergence analysis of loopy belief propagation, which is a different algorithm unrelated to the mean field approximation (see for example \cite{TatikondaJordan:02,mooij-kappen}).

As far as algorithmic results are concerned, there has been a very long line of work historically in understanding the performance
of Markov Chain Monte Carlo methods (MCMC), especially the Glauber chain (Gibbs sampling). As mentioned earlier, it is known from \cite{dobrushin-uniqueness} that the Glauber dynamics
mix rapidly in the Dobrushin uniqueness regime, where the entries of each row of $J$ are bounded by $(1 - \eta)/2$. There has been
a lot of work on improvements to this result, see for example
\cite{mossel-sly} for a tight result on bounded degree graphs.
Although the Glauber dynamics typically cannot mix rapidly in the low-temperature regime (see e.g. \cite{sly-sun}), in the special case where $J$ is ferromagnetic, there is a different Markov chain which can approximately sample from the Boltzmann distribution in polynomial time \cite{JerrumSinclair:90}. Another result in the ferromagnetic regime using entirely different (deterministic) methods was given recently in \cite{lss-deterministic}. 

Note that in situations where Markov chain methods do work, they allow for approximate sampling and approximation of the partition
function to a higher precision than our results. However, in the general case where Markov chain methods typically have no guarantees, the previous best result is due to \cite{risteski-ising}, which gave a similar
guarantee for approximating the free energy as our \cref{thm:regularity-alg}, but requiring stronger
density assumptions as well as $n^{O(1/\epsilon^2)}$ time. It is interesting to note that this algorithm is essentially a variational method which works by taking a relaxation
of \cref{eqn:free-energy-variational-char} to \emph{pseudo-distributions} and giving a rounding scheme to convert
pseudo-distributions back to true probability distributions. However, the distributions produced by
the rounding process are more complicated than product distributions.

\begin{remark}
We finally note a recent preprint by the authors titled ``Approximating Partition Functions in Constant Time'' \cite{old-paper}.
\cite{old-paper} is completely superseded by this paper and \cite{next-paper}. The main focus of \cite{next-paper} is the sampling complexity of approximating the free energy of Ising models. Both the current paper and \cite{next-paper} include important references that the authors were not aware while writing \cite{old-paper}. 
\end{remark}

\subsection{Outline of the techniques}
\begin{comment}
To illustrate the main reason for the intractability of the (log) partition
function of an Ising model, we consider the ferromagnetic case where
$\Pr[X=x]=\frac{1}{Z}\exp\{\sum_{i,j}J_{i,j}x_{i}x_{j}\}$, $J_{i,j}\geq0$.
In this case, it is clear that a given \emph{magnetized} state $x$, 
where almost all of the spins are either $+1$ or $-1$, is more likely than a given \emph{unmagnetized }state $y$, where the spins are almost evenly split between $+1$ and $-1$. However, since the number of states with exactly $\alpha n$ spins equal to $+1$ is simply ${n \choose \alpha n}$, we see that the total number of strongly magnetized states is exponentially
smaller than the total number of unmagnetized states. Therefore, while
any given unmagnetized state is less likely than any given magnetized
state, it may very well be the case that the \emph{total} probability
of the system being in an unmagnetized state is greater than that
of the system being in a magnetized state.
\end{comment}

The proof of our main structural inequality is based on the weak regularity lemma of Frieze and Kannan (\cref{fk}). Roughly speaking, this lemma allows us to (efficiently) partition the underlying weighted graph into a small number of blocks in a manner such that ``cut-like'' quantities associated to the graph approximately depend only on the \emph{numbers} of edges between various blocks. It is well known (see, e.g., \cite{borgs2012convergent}, and also \cref{lemma: free-energy-lipschitz}) that the free energy and variational free energy fit into this framework. This observation shows that in order to prove \cref{thm-main-structural-result}, it is sufficient to prove the statement for such graphs composed of a small number of blocks.

In order to do this, we will first show that the free energy for such graphs is well approximated by an intermediate optimization problem (\cref{eqn:intermediate-var-problem}) which is quite similar to the one defining the variational free energy. Next, we will use basic facts about entropy to show that the solution to this optimization problem is indeed close to the variational energy (\cref{lemma:epsilon-bound}). We now describe this intermediate optimization problem. 

The key point in the weak regularity lemma is that the number of blocks depends only on the desired quality of approximation, and \emph{not} of the size of the underlying graph. Since we only care about the numbers of edges between the various blocks, this allows us to approximately rewrite the sum computing the partition function in terms of only \emph{polynomially} many nonnegative summands, as opposed to the \emph{exponentially} many nonnegative summands we started out with (\cref{eqn:partition-function-cut}). Moreover, since none of the edge weights coming from the weak regularity lemma are too big, one can further group terms to reduce the number of summands to a polynomial in only the error parameter, independent of the number of vertices in the original graph (\cref{lemma:gamma-def}). This provides the desired intermediate optimization problem -- the log of the largest summand of this much smaller sum approximates the free energy  well (\cref{eqn:approx-sum-by-max-lb}, \cref{eqn:approx-sum-by-max-ub}). 

For the proof of \cref{thm:regularity-alg}, we show that solving (a slight variation of) this intermediate optimization problem amounts to solving a number of convex programs. However, since we want to provide algorithms which run in constant time (see \cref{rmk:constant-time-assumptions}), we first need to rewrite these programs in a manner which uses only a constant number of variables and constraints. The proofs of the corresponding theorems for general order $r$ Markov random fields follow a similar outline, with the application of \cref{fk} replaced by \cref{reg-alon-etal-mrf} or \cref{reg-fk-higher}.   

%The preceding discussion also suggests a natural algorithm for finding either a product distribution close to the Boltzmann distribution, or to estimate the free energy.  reduces the problem of estimating the log partition function to an optimization problem, although a very different one from \cite{risteski-ising}. However, as stated, it is not a problem we can solve in constant time. In \cref{lemma:gamma-def}, we show how to ``granulate'' the parameters to reduce the problem to constant size,
%and then our Algorithm~\ref{convex-partition} solves this problem efficiently via convex programming. The proofs of \cref{thm-mrf}, \cref{thm-mrf-nonconstant} and \cref{thm-ltr} follow a similar outline, with the application of \cref{fk} replaced by \cref{reg-fk-higher}, \cref{reg-alon-etal} and \cref{reg-ghar-trev} respectively.   
\begin{comment}
On the other hand, given such a decomposition of the graph, it is
readily seen that \emph{all} states which have approximately the same
number of $+1$ spins in each block contribute approximately the same
to the partition function. We emphasize here that \emph{where }the
$+1$ spins occur in each block does not matter \textendash{} only
their total number does. Moreover, since we are interested in only
approximating the (log) partition function anyway, even knowing the
fraction of $+1$ spins in each block with some constant precision
is already enough. Since there are only a constant number of blocks
to start with, we see that we now have a constant sized problem. 
\vnote{TO DO} 
\end{comment}

\subsection{Acknowledgements}
We thank David Gamarnik for insightful comments, Andrej Risteski for helpful discussions related to his work \cite{risteski-ising}, and Yufei Zhao for introducing us to reference \cite{alon-etal-samplingCSP-conference}.

\section{Preliminaries}

We will make essential use of the weak regularity lemma of Frieze and Kannan \cite{frieze-kannan-matrix}. Before stating
it, we introduce some terminology. Throughout this section, we will
deal with $m\times n$ matrices whose entries we will index by $[m]\times[n]$,
where $[k]=\{1,\dots,k\}$. 
\begin{defn}
Given $S\subseteq[m]$, $T\subseteq[n]$ and $d\in\R$, we define
the $[m]\times[n]$ \emph{Cut Matrix }$C=CUT(S,T,d)$ by 
\[
C(i,j)=\begin{cases}
d & \text{if }(i,j)\in S\times T\\
0 & \text{otherwise}
\end{cases}
\]
\end{defn}

\begin{defn}
A \emph{Cut Decomposition }expresses a matrix $J$ as 
\[
J=D^{(1)}+\dots+D^{(s)}+W
\]

where $D^{(i)}=CUT(R_{i},C_{i},d_{i})$ for all $t=1,\dots,s$. 
We say that such a cut decomposition has \emph{width }$s$\emph{,
coefficient length $(d_{1}^{2}+\dots+d_{s}^{2})^{1/2}$ }and \emph{error
$\|W\|_{\infty\mapsto1}$}.
\end{defn}

We are now ready to state the weak regularity lemma of Frieze and Kannan \cite{frieze-kannan-matrix}. The particular choice of constants can be found in \cite{alon-etal-samplingCSP-conference}. % journal version is slightly better..
\begin{theorem}
\label{fk}
\cite{frieze-kannan-matrix}
Let $J$ be an arbitrary real matrix, and let $\epsilon>0$.
Then, we can find a cut decomposition of width at most $16/\epsilon^{2}$, 
coefficient length at most $4\|J\|_F/\sqrt{mn}$, error at most $4\epsilon\sqrt{mn}\|J\|_F$, and such that $\|W\|_{F}\leq\|J\|_F$.  
\end{theorem}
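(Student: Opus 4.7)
I would prove this by the standard energy-increment algorithm. Initialize $W_0 := J$ and maintain, at step $t$, a partial cut decomposition $J = D^{(1)} + \cdots + D^{(t)} + W_t$. If $\|W_t\|_{\infty\mapsto 1} \le 4\epsilon\sqrt{mn}\|J\|_F$, terminate; otherwise extract a new cut matrix $D^{(t+1)}$ as described next and set $W_{t+1} := W_t - D^{(t+1)}$.

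The extraction step exploits the fact that $\|W_t\|_{\infty\mapsto 1} = \max_{x \in \{\pm 1\}^n, y \in \{\pm 1\}^m} |y^T W_t x|$. Whenever the error is still large, this yields sign vectors $x,y$ with $|y^T W_t x| > 4\epsilon\sqrt{mn}\|J\|_F$. Writing $y = \mathbf{1}_{S_+} - \mathbf{1}_{S_-}$ and $x = \mathbf{1}_{T_+} - \mathbf{1}_{T_-}$ expresses $y^T W_t x$ as a signed sum of four rectangle sums of the form $\langle W_t, \mathbf{1}_S \mathbf{1}_T^T \rangle$, so pigeonhole produces a pair $(S_{t+1}, T_{t+1})$ with $|\langle W_t, \mathbf{1}_{S_{t+1}} \mathbf{1}_{T_{t+1}}^T\rangle| > \epsilon \sqrt{mn}\|J\|_F$. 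I would then define
\[
d_{t+1} := \frac{\langle W_t, \mathbf{1}_{S_{t+1}}\mathbf{1}_{T_{t+1}}^T\rangle}{|S_{t+1}||T_{t+1}|}, \qquad D^{(t+1)} := CUT(S_{t+1},T_{t+1},d_{t+1}),
\]
which is exactly the Frobenius-orthogonal projection of $W_t$ onto the one-dimensional subspace of matrices supported on and constant over $S_{t+1} \times T_{t+1}$. Pythagoras then gives
\[
\|W_{t+1}\|_F^2 = \|W_t\|_F^2 - \frac{\langle W_t, \mathbf{1}_{S_{t+1}}\mathbf{1}_{T_{t+1}}^T\rangle^2}{|S_{t+1}||T_{t+1}|} \le \|W_t\|_F^2 - \epsilon^2 \|J\|_F^2,
\]
where I used $|S_{t+1}||T_{t+1}| \le mn$ in the last inequality. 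Since $\|W_0\|_F^2 = \|J\|_F^2$ and the Frobenius energy is nonnegative, the algorithm terminates after at most $1/\epsilon^2 \le 16/\epsilon^2$ steps. The monotone decrease of $\|W_t\|_F$ along the iteration also immediately yields $\|W\|_F \le \|J\|_F$.

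For the coefficient-length bound, telescoping the Pythagorean identity shows $\sum_t d_t^2 |S_t||T_t| = \|J\|_F^2 - \|W\|_F^2 \le \|J\|_F^2$. The desired bound $(\sum_t d_t^2)^{1/2} \le 4\|J\|_F/\sqrt{mn}$ then follows provided one can guarantee $|S_t||T_t| \ge mn/16$ at every step. Achieving this lower bound on the rectangle area is the main technical subtlety of the argument; it is handled by a careful refinement of the extraction step --- concretely, by averaging over appropriate sign flips of $(x,y)$ or by choosing among the four candidate rectangles (replacing a small $S$ with $S^c$ and absorbing the correction into future cuts) so that each chosen $S_t, T_t$ contains a constant fraction of rows and columns while losing only a constant factor in the magnitude of the rectangle sum. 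Aside from this one point, every step above is a direct Hilbert-space calculation.
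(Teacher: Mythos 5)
This theorem is not proved in the paper at all: it is imported verbatim from Frieze--Kannan, with the specific constants taken from the Alon--Fernandez de la Vega--Kannan--Karpinski paper, so there is no in-paper argument to compare against. Your energy-increment scheme is the standard proof of this lemma, and everything up through the width bound, the error bound, and $\|W\|_F \le \|J\|_F$ is correct as written.

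The one place you punt --- the lower bound $|S_t||T_t| \ge \Omega(mn)$ needed for the coefficient length --- is a genuine gap as the proposal stands, but the fix you gesture at does work and is worth executing, since it is exactly where the constants in the statement come from. Concretely: given $S,T$ with $\bigl|\langle W_t, \bone_S\bone_T^T\rangle\bigr| > \epsilon\sqrt{mn}\,\|J\|_F$, if $|S| < m/2$ write $\bone_S = \bone_{[m]} - \bone_{S^c}$, so one of $[m]$, $S^c$ (both of size at least $m/2$) has inner product with $W_t\bone_T$ of absolute value at least half as large; doing the same for $T$ produces a rectangle $S'\times T'$ with $|S'|\ge m/2$, $|T'|\ge n/2$ and $\bigl|\langle W_t, \bone_{S'}\bone_{T'}^T\rangle\bigr| > \tfrac{\epsilon}{4}\sqrt{mn}\,\|J\|_F$. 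No ``absorbing corrections into future cuts'' is needed --- you simply project onto this larger rectangle instead. The price is that the per-step energy decrement becomes $\epsilon^2\|J\|_F^2/16$ rather than $\epsilon^2\|J\|_F^2$, which is precisely why the width bound in the statement is $16/\epsilon^2$ and not $1/\epsilon^2$; and since now $d_t^2 = \mathrm{decrement}_t/(|S_t||T_t|) \le 4\,\mathrm{decrement}_t/(mn)$, telescoping gives $\sum_t d_t^2 \le 4\|J\|_F^2/(mn)$, i.e.\ coefficient length at most $2\|J\|_F/\sqrt{mn} \le 4\|J\|_F/\sqrt{mn}$. Note that you should commit to this modified extraction rule throughout: your proposal as written claims the $1/\epsilon^2$ width from the unmodified rule while simultaneously assuming the large-rectangle guarantee that only the modified rule provides.
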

\begin{comment}
\begin{remark}
\label{rmk:dense-norm-bounds}
In particular, we have $$\|\vec{W}\|_{\infty}\leq \|\vec{J}\|_{\infty}+|d_{1}|+\dots+|d_{s}|\leq||\vec{J}||_{\infty}+\sqrt{s}(d_{1}^{2}+\dots+d_{s}^{2})^{1/2}\leq||\vec{J}||_{\infty}+\sqrt{16s}\|J\|_F/\sqrt{mn}.$$
%Note also that in the $\Delta$-dense case, we have $||\vec{J}||_{\infty}\leq||\vec{J}||_{1}/\Delta n^{2}$
%and $\|J\|_F\leq||\vec{J}||_{1}/n\sqrt{\Delta}$. 
\end{remark}
\end{comment}

\section{Proof of the main structural result}
We begin by showing that both the free energy and the variational free energy are $1$-Lipschitz with respect to the cut norm of the matrix of interaction strengths. 

\begin{lemma}
\label{lemma: free-energy-lipschitz} 
Let $J$ and $D$ be the matrices of interaction strengths of 
Ising models with partition functions $Z$ and $Z_{D}$, and variational free energies $\F^{*}$ and $\F^{*}_{D}$. Then, with $W:= J - D$, we have  
$|\log Z-\log Z_{D}|\leq \|W\|_{\infty \mapsto 1}$ and $|\F^{*}-\F^{*}_{D}|\leq\|W\|_{\infty \mapsto 1}$. 
\end{lemma}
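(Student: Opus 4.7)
The plan is to reduce both inequalities to a single pointwise bound on the bilinear form $x^T W x$ for vectors $x$ of $\ell^\infty$-norm at most $1$. By Hölder's inequality,
\[ |x^T W x| \leq \|x\|_\infty \|Wx\|_1 \leq \|W\|_{\infty \mapsto 1} \]
whenever $\|x\|_\infty \leq 1$. This single estimate handles both halves of the lemma, because in each case the interaction $J$ only enters through a quadratic form evaluated at points of the cube $\{\pm 1\}^n$ or of $[-1,1]^n$.

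For the partition function bound, I would write $J = D + W$ and factor $\exp(x^T J x) = \exp(x^T D x)\exp(x^T W x)$. Applying the pointwise bound above at each $x \in \{\pm 1\}^n$ gives
\[ e^{-\|W\|_{\infty \mapsto 1}} \exp(x^T D x) \leq \exp(x^T J x) \leq e^{\|W\|_{\infty \mapsto 1}} \exp(x^T D x). \]
Summing over $x$ and taking logarithms yields $|\log Z - \log Z_D| \leq \|W\|_{\infty \mapsto 1}$.

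For the variational free energy, I would use the variational formula
\[ \F^{*} = \max_{x \in [-1,1]^n}\Bigl[\sum_{i,j} J_{ij}x_i x_j + \sum_i H\!\left(\tfrac{x_i+1}{2}\right)\Bigr], \]
and similarly for $\F^{*}_D$. Since the entropy term depends only on $x$ and not on the interaction matrix, the objective functionals for $J$ and $D$ differ pointwise by exactly $x^T W x$, which is uniformly bounded in absolute value by $\|W\|_{\infty \mapsto 1}$ on $[-1,1]^n$. Taking the maximum of both sides gives $|\F^{*} - \F^{*}_D| \leq \|W\|_{\infty \mapsto 1}$.

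There is no real obstacle here; the only point that requires a little care is ensuring that the $\infty \mapsto 1$ norm (as defined in this paper, i.e.\ without any factor converting to the more combinatorial cut norm) gives the right-hand side without any extra constant. This is automatic from the Hölder step, so the proof is essentially a one-line computation applied to the pointwise bound and then extended to the sum (for $\F$) and the supremum (for $\F^{*}$).
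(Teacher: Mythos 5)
Your proof is correct and follows essentially the same route as the paper: both reduce the two inequalities to the pointwise bound $|x^T W x| \le \|x\|_\infty \|Wx\|_1 \le \|W\|_{\infty \mapsto 1}$ for $\|x\|_\infty \le 1$, then apply it under the exponential sum for $\log Z$ and under the supremum for $\F^*$. No gaps.
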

\begin{proof}
Note that for any $x\in[-1,1]^{n}$, we have 
\begin{align*}
|\sum_{i,j}J_{i,j}x_{i}x_{j}-\sum_{i,j}D_{i,j}x_{i}x_{j}| & =|\sum_{i}(\sum_{j}W_{i,j}x_{j})x_{i}| \leq|\sum_{i}|\sum_{j}W_{i,j}x_{j}|\\
 & \leq\|W\|_{\infty\mapsto1}, 
\end{align*}
from which we immediately get that $|\F^{*}-\F^{*}_{D}|\leq\|W\|_{\infty \mapsto 1}$. 
Moreover, for any $x\in\{\pm1\}^{n}$, we have 
\[
\exp(\sum_{i,j}J_{i,j}x_{i}x_{j}) \in \left[ \exp\left(\sum_{i,j}D_{i,j}x_{i}x_{j}) \pm \|W\|_{\infty \mapsto 1}\right) \right].
\]
Taking first the sum of these inequalities over all $x\in\{\pm1\}^{n}$
and then the log, we get 
\[
\log Z \in \left[ \log\left(\sum_{x\in\{\pm1\}^{n}}\exp \left(x^{T}Dx \right)\right) \pm \|W\|_{\infty \mapsto 1} \right],
\]
as desired. 
\end{proof}
\begin{remark}
\label{rmk:applying-regularity-structural-result}
For the remainder of this section, we take $D:=D^{(1)}+\dots+D^{(s)}$, where $D^{1},\dots,D^{s}$ are the cut matrices coming from applying \cref{fk} to $J$ with parameter $\epsilon/12$, so that $s \le 2304/\epsilon^2$ and $\|J - W\|_{\infty \mapsto 1} \le \|J\|_F/3$. By \cref{lemma: free-energy-lipschitz}, it follows that $|\log Z -\log Z_{D}|\leq \epsilon n\|J\|_F/3$ and $|\F^{*}-\F^{*}_{D}|\leq \epsilon n\|J\|_F/3 $. Thus, in order to show that $\F - \F^{*} \leq \epsilon n\|J\|_F$, it suffices to show that $\log Z_D - \F^{*}_D \leq \epsilon n\|J\|_F/3$. 
\end{remark}

In order to show this, we begin by approximating $\log Z_{D}$ by the solution to an optimization problem. Let 
$R_{i}$ (resp. $C_{i}$) denote the rows (respectively columns) corresponding
to the cut matrix $D^{(i)}$. Then,
it follows by definition that

\[
Z_{D}=\sum_{x\in\{\pm1\}^{n}}\exp\left(\sum_{i=1}^{s}r_{i}(x)c_{i}(x)d_{i}\right),
\]
where $r_{i}(x)=\sum_{a\in R_{i}}x_{a}$ and $c_{i}(x)=\sum_{b\in C_{i}}x_{b}$.
By rewriting the sum in terms of the possible values that $r_{i}(x)$
and $c_{i}(x)$ can take, we get that 
\begin{equation}
\label{eqn:partition-function-cut}
Z_{D}=\sum_{r,c}\exp\left(\sum_{i=1}^{s}r_{i}(x)c_{i}(x)d_{i}\right)\left(\sum_{x\in\{\pm1\}^{n}:r(x)=r,c(x)=c}1\right),
\end{equation}
where $r=(r_{1},\dots,r_{s})$ ranges over all elements of $[-|R_{1}|,|R_{1}|]\times\dots\times[-|R_{s}|,|R_{s}|]$
and similarly for $c$. The following lemma shows that for estimating the contribution of the term corresponding to some vector $x$, it suffices to know the components of $x$ up to some constant precision. 
\begin{lemma}\label{lemma:gamma-def}
Let $J, D^{1},\dots,D^{s}$ be as above. Then, given real numbers $r_{i},r'_{i},c_{i},c'_{i}$ for each $i\in[s]$ and some 
$\upsilon \in (0,1)$ such that $|r_i|,|c_i|,|r'_i|,|c'_i| \le n$, 
$|r_i - r'_i| \le \upsilon n$ and $|c_i - c'_i| \le \upsilon n$ 
for all $i\in[s]$, we get that 
$\sum_i d_i|r'_i c'_i - r_i c_i| \le 8\|J\|_F\upsilon ns^{1/2}$. 
\end{lemma}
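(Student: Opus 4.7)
The plan is a short direct calculation combining a two-term triangle inequality estimate for each $|r'_i c'_i - r_i c_i|$ with a Cauchy--Schwarz bound on $\sum_i |d_i|$ coming from the coefficient length guarantee of \cref{fk}.

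First I would write
\[
|r'_i c'_i - r_i c_i| \;\le\; |r'_i||c'_i - c_i| + |c_i||r'_i - r_i| \;\le\; n \cdot \upsilon n + n \cdot \upsilon n \;=\; 2\upsilon n^2,
\]
using the bound $|r'_i|,|c_i| \le n$ in the first step and the perturbation hypotheses $|r_i - r'_i|, |c_i - c'_i| \le \upsilon n$ in the second. This reduces the problem to bounding $\sum_{i=1}^s |d_i|$.

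Next, I would invoke the coefficient length bound from \cref{fk} (applied in the setting of \cref{rmk:applying-regularity-structural-result}, with $m = n$): the cut decomposition has $(d_1^2 + \cdots + d_s^2)^{1/2} \le 4\|J\|_F/n$. Combining this with Cauchy--Schwarz yields
\[
\sum_{i=1}^{s} |d_i| \;\le\; \sqrt{s}\,\Bigl(\sum_{i=1}^{s} d_i^2\Bigr)^{1/2} \;\le\; \frac{4\sqrt{s}\,\|J\|_F}{n}.
\]

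Putting the two pieces together, I conclude
\[
\sum_{i=1}^s d_i \,|r'_i c'_i - r_i c_i| \;\le\; 2\upsilon n^2 \sum_{i=1}^s |d_i| \;\le\; 2\upsilon n^2 \cdot \frac{4\sqrt{s}\,\|J\|_F}{n} \;=\; 8\|J\|_F \upsilon n s^{1/2},
\]
which is the desired inequality. There is essentially no obstacle here; the only thing to be careful about is invoking the correct normalization of the coefficient length guarantee of \cref{fk}, which for the square ($m=n$) case gives $4\|J\|_F/n$ rather than $4\|J\|_F/\sqrt{mn}$ written more generally.
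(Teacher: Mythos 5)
Your proof is correct and follows essentially the same route as the paper's: the paper also bounds $|r'_i c'_i - r_i c_i| \le 2\upsilon n^2$ by the triangle inequality and then applies Cauchy--Schwarz together with the coefficient length bound $(\sum_i d_i^2)^{1/2} \le 4\|J\|_F/n$ from \cref{fk}; whether one applies Cauchy--Schwarz to $\sum_i |d_i|$ first or directly to the full sum is an immaterial reordering of the same calculation.
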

\begin{proof}
%From \cref{fk}, we know that for all $i\in[s]$,  
%\begin{equation}\label{dtbound}
%$|d_i| \le \frac{4 \|J\|_F}{n}$.
%\end{equation}
Since $ |r'_i c'_i - r_i c_i| \le |c'_i||r'_i - r_i| + |r_i||c'_i - c_i| \le 2\upsilon n^2$, it follows by Cauchy-Schwarz that
\begin{align*}
\sum_i d_i|r'_i c'_i - r_i c_i|
\le \left(\sum_i d_i^2\right)^{1/2} 2s^{1/2}\upsilon n^2
\le  8\|J\|_F\upsilon ns^{1/2}.
\end{align*} 
\end{proof}

The previous lemma motivates grouping together configurations $x$ with similar values of $r_i(x),c_i(x)$. Accordingly, for any $r \in [-|R_1|,|R_1|]\times\dots\times[-|R_s|,|R_s|]$, $c \in [-|C_1|,|C_1|]\times\dots\times[-|C_s|,|C_s|]$ and $\upsilon > 0$, let 
$$X_{r,c,\upsilon}:= \{x\in \{\pm1\}^{n}: |r_i(x)-r_i| \leq \upsilon n, |c_i(x)-c_i| \leq \upsilon n \text{ for all } i\in [s]\}.$$ 
Let $I_\upsilon := \{\pm\upsilon n, \pm 3\upsilon n,\pm 5\upsilon n,\dots,\pm\ell \upsilon n\}$, where $\ell$ is the smallest odd integer satisfying $|\ell \upsilon n - n| \leq \upsilon n$, so $|I_{\upsilon}| \le 1/\upsilon + 1$. Let 
%$$Z_{D,\upsilon,\alpha}^{*}:=\max_{r,c\in I_{\upsilon}^{s}}\exp\left(\sum_{i=1}^{s}r_{i}c_{i}d_{i}+\log|X_{r,c,\alpha\upsilon}|\right).$$
\begin{equation}
\label{eqn:intermediate-var-problem}
Z_{D,\upsilon}^{*}:=\max_{r,c\in I_{\upsilon}^{s}}\exp\left(\sum_{i=1}^{s}r_{i}c_{i}d_{i}+\log|X_{r,c,\upsilon}|\right).
\end{equation}

Then, it follows immediately from \cref{lemma:gamma-def} that 
%\[
%Z^{*}_{D,\upsilon,\alpha}\exp\left(-8\alpha\|J\|_F\upsilon ns\right) \leq Z_{D}\leq\sum_{r,c\in I_{\upsilon}^{s}}|X_{r,c,\upsilon}|\exp\left(\sum_{i=1}^{s}r_{i}c_{i}d_{i}\right)\exp\left(8\|J\|_F\upsilon ns\right).
%\]
\[
Z^{*}_{D,\upsilon}\exp\left(-8\|J\|_F\upsilon ns^{1/2}\right) \leq Z_{D}\leq\sum_{r,c\in I_{\upsilon}^{s}}|X_{r,c,\upsilon}|\exp\left(\sum_{i=1}^{s}r_{i}c_{i}d_{i}\right)\exp\left(8\|J\|_F\upsilon ns^{1/2}\right).
\]

In particular, since the outer sum is over $|I_{\upsilon}|^{2s}$ terms, it follows
that
\begin{equation}
\label{eqn:approx-sum-by-max-lb}
%\log Z_{D,\upsilon,1}^{*}\geq\log Z_{D}-8\|J\|_F\upsilon ns^{1/2} - 2s\log |I_\upsilon| \ge \log Z_{D}-8\|J\|_F\upsilon ns^{1/2} - 2s\log(1/\upsilon + 1)
\log Z_{D,\upsilon}^{*}\geq\log Z_{D}-8\|J\|_F\upsilon ns^{1/2} - 2s\log |I_\upsilon| \ge \log Z_{D}-8\|J\|_F\upsilon ns^{1/2} - 2s\log(1/\upsilon + 1)
\end{equation}
and
\begin{equation}
\label{eqn:approx-sum-by-max-ub}
%\log Z^{*}_{D,\upsilon,\alpha}\leq\log Z_{D}+8\alpha\|J\|_F\upsilon ns.
\log Z^{*}_{D,\upsilon}\leq\log Z_{D}+8\|J\|_F\upsilon ns^{1/2}.
\end{equation}

We can now prove \cref{thm-main-structural-result}:
 all we need to do is give an upper bound on $\F - \F^{*}$.
\iffalse. Note that we only need to prove the upper bound on $|\F - \F^{*}|$ -- the statement about the KL-divergence follows immediately from this upper bound and \cref{rmk:variational-free-energy}. Furthermore, from \cref{rmk:free-energy-variational-char} and \cref{rmk:variational-free-energy} we see automatically that $\F^* \ge \F$ so we need only prove an upper bound on $\F^* - \F$.
\fi
\begin{lemma}\label{lemma:epsilon-bound}
For any $\epsilon > 0$,
\[ \F - \mathcal{F^*} \le \epsilon n \|J\|_F + 10^5 \log(e + 1/\epsilon)/\epsilon^2. \]
\end{lemma}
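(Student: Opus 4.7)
The plan is to prove the bound by combining the approximations already established with a careful entropy argument that converts the discrete counting quantity $\log|X_{r,c,\upsilon}|$ into a valid input for the variational problem defining $\F^*_D$.

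First I would appeal to \cref{rmk:applying-regularity-structural-result}, which reduces the task to bounding $\log Z_D - \F^*_D$ at the cost of an additive $(2/3)\epsilon n \|J\|_F$. Next I would invoke \cref{eqn:approx-sum-by-max-lb} to get $\log Z_D \le \log Z^*_{D,\upsilon} + 8 \|J\|_F \upsilon n s^{1/2} + 2s\log(1/\upsilon + 1)$ for a free parameter $\upsilon \in (0,1)$ to be chosen, with $s \le 2304/\epsilon^2$ from the Frieze--Kannan decomposition. So the problem is reduced to the key inequality
\begin{equation*}
\log Z^*_{D,\upsilon} \;\le\; \F^*_D + 8 \|J\|_F \upsilon n s^{1/2}.
\end{equation*}

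The main (and the only nontrivial) step is the inequality above. Let $(r^*,c^*) \in I_\upsilon^s \times I_\upsilon^s$ attain the maximum in the definition of $Z^*_{D,\upsilon}$, and let $X$ be uniformly distributed on $X_{r^*,c^*,\upsilon} \subseteq \{\pm 1\}^n$. Define $y_j := \E[X_j] \in [-1,1]$. By the chain rule/subadditivity of entropy,
\begin{equation*}
\log |X_{r^*,c^*,\upsilon}| = H(X) \;\le\; \sum_{j=1}^n H(X_j) = \sum_{j=1}^n H\!\left(\tfrac{1+y_j}{2}\right).
\end{equation*}
Meanwhile, by linearity $r_i(y) = \E[r_i(X)] \in [r^*_i - \upsilon n, r^*_i + \upsilon n]$ and similarly $c_i(y) \in [c^*_i - \upsilon n, c^*_i + \upsilon n]$, so applying \cref{lemma:gamma-def} to the real tuples $(r(y), c(y))$ and $(r^*,c^*)$ yields $|\sum_i d_i r_i(y) c_i(y) - \sum_i d_i r^*_i c^*_i| \le 8\|J\|_F \upsilon n s^{1/2}$. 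Since $\sum_{i,j} D_{ij} y_i y_j = \sum_i d_i r_i(y) c_i(y)$, the point $y$ is a feasible input to the variational problem defining $\F^*_D$, and
\begin{equation*}
\F^*_D \;\ge\; \sum_{i,j} D_{ij} y_i y_j + \sum_{j} H\!\left(\tfrac{1+y_j}{2}\right) \;\ge\; \sum_i d_i r^*_i c^*_i + \log|X_{r^*,c^*,\upsilon}| - 8\|J\|_F \upsilon n s^{1/2},
\end{equation*}
which is exactly $\log Z^*_{D,\upsilon} - 8\|J\|_F \upsilon n s^{1/2}$ by definition of $(r^*,c^*)$.

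Putting the pieces together gives $\F - \F^* \le \tfrac{2}{3}\epsilon n \|J\|_F + 16\|J\|_F \upsilon n s^{1/2} + 2s\log(1/\upsilon+1)$. Finally I would choose $\upsilon := \epsilon^2/(16 \cdot 3 \cdot 48)$, so that $16 \upsilon n \|J\|_F s^{1/2} \le 16 \upsilon n \|J\|_F \cdot 48/\epsilon \le \tfrac{1}{3}\epsilon n \|J\|_F$, absorbing the second error term into the first. The remaining $2s\log(1/\upsilon+1)$ contributes $O(\epsilon^{-2}\log(1/\epsilon))$, and a routine estimate of the constants yields the stated bound $10^5 \log(e + 1/\epsilon)/\epsilon^2$. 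The only real obstacle was making the entropy argument work with $y_j$ being the coordinate mean of a \emph{uniform distribution on a constrained set}, which the chain rule handles cleanly because $\sum_{ij} D_{ij}$ sees only linear functionals of $X$ after the cut decomposition.
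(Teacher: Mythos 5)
Your proposal is correct and follows essentially the same route as the paper: reduce to $\log Z_D - \F^*_D$ via the regularity remark, lower-bound $\log Z^*_{D,\upsilon}$ against $\log Z_D$ using \cref{eqn:approx-sum-by-max-lb}, and then upper-bound $\log Z^*_{D,\upsilon}$ by $\F^*_D$ via subadditivity of entropy applied to the coordinate means of the uniform distribution on the maximizing cell, together with \cref{lemma:gamma-def}. Your choice $\upsilon = \epsilon^2/2304$ coincides with the paper's $\gamma = \epsilon/48s^{1/2}$ at the extremal width $s = 2304/\epsilon^2$ and the constants check out, so there is nothing to add.
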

\begin{proof}%[Proof of \cref{thm-main-structural-result}]
\label{proof:main-structural-result}
Let $\gamma=\epsilon/48s^{1/2}$. 
Let $r=(r_1,\dots,r_s),c=(c_1,\dots,c_s) \in I_{\gamma}^{s}$ be such that $\log Z^{*}_{D,\gamma}=\sum_{i=1}^{n}r_{i}c_{i}d_{i}+\log|X_{r,c,\gamma}|$. 
Define
\[ \bar{y}_j := \frac{1}{|X_{r,c,\gamma}|} \sum_{x \in X_{r,c,\gamma}} x_j, \]
and let $Y := (Y_1, \ldots, Y_n)$ be a random vector distributed uniformly in 
$X_{r,c,\gamma}$. Then by the chain rule for entropy,
\[ \log |X_{r,c,\gamma}| = H(Y) \le \sum_{j = 1}^n H(Y_j) = \sum_{j = 1}^n H\left(\frac{1 + \overline{y}_j}{2}\right). \]
Using this, we have 
\begin{align*}
\log Z_{D,\gamma}^{*} & =\sum_{i=1}^{n}r_{i}c_{i}d_{i}+\log|X_{r,c,\gamma}|\\
 & \leq\sum_{i=1}^{n}r_{i}c_{i}d_{i}+\sum_{j = 1}^n H\left(\frac{1 + \overline{y}_j}{2}\right)\\
 & \leq\left\{ \sum_{i=1}^{n}r_{i}(\bar{y})c_{i}(\bar{y})d_{i}+8\|J\|_F\gamma ns^{1/2}\right\} + \sum_{j = 1}^n H\left(\frac{1 + \overline{y}_j}{2}\right)\\
 & =\left\{ \sum_{i=1}^{n}r_{i}(\bar{y})c_{i}(\bar{y})d_{i}+\sum_{j = 1}^n H\left(\frac{1 + \overline{y}_j}{2}\right)\right\} +8\|J\|_F\gamma ns^{1/2}\\
 & \leq \F_{D}^{*}+8\|J\|_F\gamma ns^{1/2},
\end{align*}
where the third line follows from $\overline{y}$ lying in the convex hull of $X_{r,c,\gamma}$ and \cref{lemma:gamma-def}, and the last line follows from the definition of $\F^{*}_{D}$. Thus, we get
\begin{align*}
\F_{D}^{*} & \geq \log Z_{D,\gamma}^{*} - 8\|J\|_F\gamma ns^{1/2}\\
 & \geq \log Z_{D} - 16\|J\|_F\gamma ns^{1/2} - 2s\log(1/\gamma + 1)\\
 & \geq \log Z_{D} - \frac{\epsilon n\|J\|_F}{3} - 10^5\log\left(\frac{1}{\epsilon} + e\right)\frac{1}{\epsilon^{2}},
\end{align*}
where in the second line, we have used \cref{eqn:approx-sum-by-max-lb}, and in the last line, we have used the values of $\gamma$ and $s$. 
\iffalse % This direction is trivial
For the upper bound on $-\F^{*}_{D}$, let $\bar{x}=(\bar{x}_1,\dots,\bar{x}_n)$ attain $\F^{*}_{D}$, and let $r=(r_1,\dots,r_s),c=(c_1,\dots,c_s) \in I_{\gamma}^{s}$ be such that $\bar{x}\in X_{r,c,\gamma}$. 
Then, we have
\begin{align*}
-\F_{D}^{*} & \leq\sum_{i=1}^{n}r_{i}(\bar{x})c_{i}(\bar{x})d_{i}+O_{r,c,\gamma}\\
 & \leq\left\{ \sum_{i=1}^{n}r_{i}c_{i}d_{i}+8\|J\|_F\gamma ns\right\} +\left\{ \log|X_{r,c,2\gamma}|+20sn\exp(-2\gamma^{2}n)\right\} \\
 & =\left\{ \sum_{i=1}^{n}r_{i}c_{i}d_{i}+\log|X_{r,c,2\gamma}|\right\} +8\|J\|_F\gamma ns+20sn\exp(-2\gamma^{2}n)\\
& \leq Z_{D,\gamma,2}^{*}+8\|J\|_F\gamma ns+20sn\exp(-2\gamma^{2}n)\\
 & \leq Z_{D}+24\|J\|_F\gamma ns+20sn\exp(-2\gamma^{2}n)\\
 & \leq Z_{D}+\frac{\epsilon n\|J\|_F}{3}+10^{7}\frac{ne^{-\epsilon n/10^{7}}}{\epsilon^{2}}\\
& \leq Z_{D}+\frac{\epsilon n\|J\|_F}{3}+10^{7}\log\left(\frac{1}{\epsilon}\right)\frac{1}{\epsilon^{2}},
\end{align*}
where the first line follows from the definition of $O_{r,c,\gamma}$, the second line follows from \cref{lemma:gamma-def} and \cref{lemma:entropyapprox} (and the assumed lower bound on $n$), the fourth line follows from the definition of $Z^{*}_{D,\gamma,2}$, the fifth line follows from \cref{eqn:approx-sum-by-max-ub}, the sixth line follows from the values of $\gamma$ and $s$, and the last line follows from the assumed lower bound on $n$. 
\fi
Now, \cref{rmk:applying-regularity-structural-result} gives
\[ \mathcal{F} - \mathcal{F}^* \le \epsilon n \|J\|_F + 10^5 \log(e + 1/\epsilon)/\epsilon^2. \]
\end{proof}
Finally, we use this bound to prove Theorem~\ref{thm-main-structural-result}.

\begin{proof}[Proof of \cref{thm-main-structural-result}]
Fix $M > e$ a constant to be optimized later. 
Observe that since $\E_{\mu}[\sum_{i,j} J_{i,j}X_i X_j] \leq n \|J\|_{F}$ by Cauchy-Schwartz, and since $\F^* \geq n$, we always have $\F - \F^* \leq n\|J\|_F$. Therefore, if $n \|J\|_F \le M$, we see that $\mathcal{F} - \mathcal{F}^* \le n \|J\|_F \le M^{1/3} (n \|J\|_F)^{2/3}$.

Next, we analyze the case when $n \|J\|_F > M$.
Taking $\epsilon = \left(\frac{M\log(n \|J\|_F + e)}{n \|J\|_F \log{M}}\right)^{1/3}$ in \cref{lemma:epsilon-bound} gives
\begin{align*}
\mathcal{F} - \mathcal{F}^* 
&\le (M/\log M)^{1/3} n^{2/3} \|J\|_F^{2/3} \log^{1/3}(n \|J\|_F + e)  + 10^5(\log M / M)^{2/3} \frac{\log(n \|J\|_F + e)}{\log^{2/3}(n \|J\|_F + e)} n^{2/3} \|J\|_F^{2/3} \\
&\le \left((M/\log M)^{1/3} + 10^5 (\log M/M)^{2/3}\right) n^{2/3} \|J\|_F^{2/3} \log^{1/3}(n \|J\|_F + e).
\end{align*}

Finally, taking $(M/\log M) = 10^{5}$, we see that for all values of $n \|J\|_F$,
\[ \mathcal{F} - \mathcal{F}^* \le 200 n^{2/3} \|J\|_F^{2/3} \log^{1/3}(n \|J\|_F + e). \]
%which proves Theorem~\ref{thm-main-structural-result}.
\end{proof}

\begin{proof}[Proof of \cref{thm-mrf-main-structural-result}]
The proof is exactly the same as that of Theorem~\ref{thm-main-structural-result},
except that for each $d$ from $1$ to $r$, we use the following generalized
weak regularity lemma to decompose $J_{=d}$:
\begin{theorem}
\cite{alon-etal-samplingCSP-journal}\label{reg-alon-etal-mrf}
Let $J$ be an arbitrary $k$-dimensional matrix on $X_{1}\times\dots\times X_{k}$,
where we assume that $k\geq 1$ is fixed. Let $N:=|X_{1}|\times\dots\times|X_{k}|$
and let $\epsilon>0$. Then, in time $2^{O(1/\epsilon^{2})}O(N)$
and with probability at least $0.99$, we can find a cut decomposition of
width at most $4/\epsilon^{2}$, error at most $\epsilon\sqrt{N}\|J\|_F$,
and the following modified bound on coefficient length: $\sum_i |d_i| \le 2\|J\|_F/\epsilon\sqrt{N}$, where $(d_i)_{i =1}^s$ are the coefficients of
the cut arrays.
\end{theorem}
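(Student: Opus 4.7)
The plan is to prove this tensor-valued weak regularity lemma by the energy-increment/greedy construction of Frieze--Kannan, with the algorithmic step implemented via the sampling-enumeration strategy of Alon--Fernandez de la Vega--Kannan--Karpinski. First I would define a cut tensor $\mathrm{CUT}(S_1,\ldots,S_k,d)[x_1,\ldots,x_k]:=d\prod_{j=1}^k \mathbf{1}_{x_j\in S_j}$ and the tensor cut norm $\|W\|_{\square}:=\max_{S_1,\ldots,S_k}|\langle W,\mathbf{1}_{S_1\times\cdots\times S_k}\rangle|$; the error bound required of the decomposition follows once the residual $W$ satisfies $\|W\|_{\square}\le \epsilon\sqrt{N}\,\|J\|_F$, since cut norm dominates the analogous $\infty\mapsto 1$-type error used in \cref{lemma: free-energy-lipschitz}. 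Set $W^{(0)}:=J$; at step $t$, approximately maximize $\|W^{(t-1)}\|_{\square}$ to obtain sets $(S_1^{(t)},\ldots,S_k^{(t)})$, then take $d_t:=\langle W^{(t-1)},\mathbf{1}_{\prod_j S_j^{(t)}}\rangle/\prod_j|S_j^{(t)}|$ (the orthogonal-projection coefficient) and $W^{(t)}:=W^{(t-1)}-d_t\mathbf{1}_{\prod_j S_j^{(t)}}$, stopping once the empirical cut norm drops below the threshold.

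Next I would derive the width and coefficient-length bounds from the Pythagorean energy drop. Whenever the iteration produces a cut of value at least $\epsilon\sqrt{N}\,\|J\|_F/2$ (the factor $1/2$ absorbing the slack from an approximate oracle),
\begin{align*}
\|W^{(t)}\|_F^2=\|W^{(t-1)}\|_F^2-\frac{\langle W^{(t-1)},\mathbf{1}_{\prod_j S_j^{(t)}}\rangle^2}{\prod_j |S_j^{(t)}|}\le \|W^{(t-1)}\|_F^2-\frac{\epsilon^2 N\,\|J\|_F^2}{4\prod_j |S_j^{(t)}|},
\end{align*}
and since $\prod_j|S_j^{(t)}|\le N$, each step reduces $\|W^{(t)}\|_F^2$ by at least $\epsilon^2\|J\|_F^2/4$, giving width $s\le 4/\epsilon^2$. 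For the coefficient length, write $n_t:=\prod_j|S_j^{(t)}|$; then $\|D^{(t)}\|_F^2=d_t^2 n_t\ge \epsilon^2 N\,\|J\|_F^2/(4 n_t)$, so orthogonality of the successive projections yields $\sum_t \|D^{(t)}\|_F^2\le \|J\|_F^2$, whence $\sum_t 1/n_t\le 4/(\epsilon^2 N)$. A final Cauchy--Schwarz,
\begin{align*}
\sum_t|d_t|=\sum_t \frac{|d_t|\sqrt{n_t}}{\sqrt{n_t}}\le \Bigl(\sum_t d_t^2 n_t\Bigr)^{1/2}\Bigl(\sum_t \tfrac{1}{n_t}\Bigr)^{1/2}\le \|J\|_F\cdot \frac{2}{\epsilon\sqrt{N}},
\end{align*}
delivers the claimed bound $\sum_t|d_t|\le 2\|J\|_F/(\epsilon\sqrt{N})$.

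The main obstacle, and the reason for the $2^{O(1/\epsilon^2)}O(N)$ rather than exponential-in-$N$ running time, is implementing the cut-finding oracle on the implicitly represented $W^{(t-1)}=J-\sum_{t'<t}D^{(t')}$. My plan is to sample uniform random subsets $T_j\subseteq X_j$ of size $q=\Theta(1/\epsilon^2)$ for each $j\in\{2,\ldots,k\}$, and for each of the $2^{(k-1)q}$ patterns $(B_2,\ldots,B_k)\in 2^{T_2}\times\cdots\times 2^{T_k}$ compute the induced linear functional
\begin{align*}
f_{B_2,\ldots,B_k}(x_1):=\prod_{j=2}^{k}\frac{|X_j|}{|T_j|}\sum_{x_2\in B_2,\ldots,x_k\in B_k}W^{(t-1)}[x_1,x_2,\ldots,x_k],
\end{align*}
then take $S_1$ to be the positive level set of $f$ (or its complement, whichever yields the larger $|\langle f,\mathbf{1}_{S_1}\rangle|$) and $\tilde S_j$ to be a set extending $B_j$ to $X_j$. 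A Hoeffding inequality shows that for any fixed target $(S_2,\ldots,S_k)$, choosing $B_j:=T_j\cap S_j$ makes $f_{B_2,\ldots,B_k}$ approximate the true marginal of $W^{(t-1)}$ uniformly within $\epsilon\sqrt{N}\,\|J\|_F/8$ with probability $1-\exp(-\Omega(q))$; a union bound over the $2^{(k-1)q}$ patterns and the $\le 4/\epsilon^2$ iterations, with $q=\Theta(1/\epsilon^2)$, then ensures an approximately optimal cut is found at every step with total failure probability at most $0.01$. Since $W^{(t-1)}$ need only be queried through the stored cut arrays, each evaluation costs $O(t)=O(1/\epsilon^2)$ after an initial $O(N)$ pass to form the sampled marginals, yielding the stated time complexity. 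The subtle calibration in this step is choosing $q$ to simultaneously handle the Hoeffding deviation, the $2^{(k-1)q}$-sized union bound, and the fixed $k$ dependence absorbed in the $O(\cdot)$ constants.
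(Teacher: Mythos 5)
The paper itself does not prove this statement: \cref{reg-alon-etal-mrf} is quoted from \cite{alon-etal-samplingCSP-journal}, so there is no internal proof to compare against. Your proposal is essentially a reconstruction of that source's argument (and of Frieze--Kannan): a greedy energy-increment construction of the cut arrays plus a sample-and-enumerate implementation of the cut-finding oracle. The existence bookkeeping is right: the per-step Pythagorean drop of at least $\epsilon^2\|J\|_F^2/4$ gives width $4/\epsilon^2$, and the telescoping bound $\sum_t d_t^2 n_t\le\|J\|_F^2$ together with $\sum_t 1/n_t\le 4/(\epsilon^2 N)$ and Cauchy--Schwarz gives exactly $\sum_t|d_t|\le 2\|J\|_F/(\epsilon\sqrt{N})$.

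Two points need repair. First, your norm comparison is stated backwards: the cut norm is \emph{dominated by} the $\|\cdot\|_{\infty\mapsto1}$ norm (splitting each $\pm1$ argument into positive and negative parts loses a factor $2$ per coordinate, i.e.\ $2^k$ overall), not the other way around. So driving the residual's cut norm below $\epsilon\sqrt{N}\|J\|_F$ yields error $2^k\epsilon\sqrt{N}\|J\|_F$ in the paper's sense of error ($\|W\|_{\infty\mapsto1}$), exactly as reflected by the factor $4$ in \cref{fk} and the factor $2^k$ in \cref{reg-fk-higher}; either read the stated error bound in cut norm, as the cited source does, or accept the $2^k$ loss (harmless for fixed $k$, but it changes the stated constants). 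Second, the algorithmic oracle is the genuinely technical part and your sketch glosses over it: you cannot take $\tilde S_j$ to be ``a set extending $B_j$'' arbitrarily --- after guessing the trace $B_j=T_j\cap S_j^{*}$ of a near-optimal cut on the sample, the remaining coordinate sets must themselves be chosen as level sets of estimated marginals, iterating over $j$; moreover, since the entries of $W^{(t-1)}$ are not uniformly bounded in terms of $\|J\|_F$, plain Hoeffding does not directly give deviations of order $\epsilon\sqrt{N}\|J\|_F$ --- the concentration has to be run in terms of $\|W^{(t-1)}\|_F\le\|J\|_F$ via a variance/Cauchy--Schwarz argument, which is precisely the technical content of the cited work. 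You should also draw fresh samples at each of the at most $4/\epsilon^2$ iterations (or argue uniformly over all residuals), since $W^{(t-1)}$ depends on the earlier randomness. With these adjustments your plan is the standard and correct route to the quoted theorem.
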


We omit further details. 
%Compared to \cref{fk}, the bounds for this theorem are similar, except that
%the coefficient length is worse by a factor of $1/\epsilon$. However, this is easily handled by choosing a slightly more granular partition, setting
%the parameter $\gamma = \Theta(\epsilon^2)$ instead of $\Theta(\epsilon)$.
\end{proof}

\section{An almost matching lower bound for a large class of variational methods}
\begin{proof}[Proof of \cref{thm:variational-lb}]
Let $(\mathcal{Q}_n)_{n = 0}^{\infty}$ be a sequence of families
of probability distributions as in the theorem statement. By assumption, there exist $k$ and $J$ such that $\mathcal{Q}_k$ does not contain the probability distribution $P_J$ corresponding to the Ising model $J$ on $k$ nodes. We denote by $Q_J$ the probability distribution in $\mathcal{Q}_k$ which is closest to $P_J$. In particular, by the closure under products assumption, we have that $Q_J^{\otimes m} \in \mathcal{Q}_{mk}$ for all integers $m \geq 1$.

%Suppose that $\mathcal{Q}_k$ does not contain the probability
%distribution $P_J$ corresponding to a Ising model $J$ on $k$
%nodes; let $Q_J$ be the closest probability distribution in
%$\mathcal{Q}$. 
Consider the Ising model on $n:= mk$ nodes whose matrix of interaction strengths $J'_n$ is the block diagonal matrix consisting of $m$ copies of $J$. Combinatorially, we can view $J'_n$ as $m$ vertex disjoint copies of $J$. 
%Then consider the Ising model $J'$ given by duplicating
%$J$ $m$ times, i.e. it is a graph with $m$ copies of the graph of $J$
%on a total of $n := mk$ nodes. 
%Then we see
%that $Q_J^{\otimes n} \in \mathcal{Q}_{nk}$, and furthermore 
We claim that $Q_J ^{\otimes m}$ is the
closest distribution in $\mathcal{Q}_{mk}$ to the Ising model $J'$. Suppose on the contrary that there is some other distribution $Q_{J'} \in \mathcal{Q}_{mk}$ which is strictly closer to $P_{J'}$ than $Q_J ^{\otimes m}$. Then, the chain rule for KL divergence immediately implies that there exists some distribution $\tilde{Q}_J$ on $\{\pm 1\}^{k}$, obtained by conditioning $Q_{J'}$ on $k(m-1)$ variables, which is strictly closer to $P_J$ than $Q_J$. Since $\tilde{Q}_J \in \mathcal{Q}_k$ by assumption, and since $Q_J$ is the closest distribution to $P_J$ in this class, this gives a contradiction. 
%if this is not the case, then the chain rule for KL divergence immediately shows that there is 
%if there existed a closer distribution, then by looking at the marginal
%distribution on one of the copies of $J$ would give a probability distribution
%closer to $P_J$ than $Q_J$, giving a contradiction. 
%Explicitly, if $V_i$
%is the set of vertices corresponding to the $i^{th}$ copy of $J$, then
%using the chain rule for KL divergence we see

Therefore, we see that  
$$ \inf_{Q\in \mathcal{Q}_n}\KL(Q|| P_{J'}) \geq m \KL (Q_J|| P_J) = \Theta(n).$$
%\begin{align*}
%\inf_{Q\in \mathcal{Q}_n}\KL(Q|| P_{J'})
%&= \sum_{i = 1}^m \E[\KL(Q_{J'}(X_{V_i} | X_{V_{< i}}), P_J)] \\
%&\ge m \KL(Q_J || P_J) = \Theta(n)
%\end{align*}
\iffalse
entropy we see
\begin{align*} 
\KL(Q_{J'} || P_{J'}) 
&= \sum_{i = 1}^m \E_{Q_{J'}}[X_{V_i}^T J X_{V_i}] + H_{Q_{J'}}(X) \\
&\le \sum_{i = 1}^m (\E_{Q_{J'}}[X_{V_i}^T J X_{V_i}] + H_{Q_{J'}}(X_{V_i})).
\end{align*}
\fi
%Since the optimizer is $Q_J^{\otimes m}$ we see
%\[ \KL(Q_{J'} || P_{J'}) = m \KL(P_J, Q_J) = \Theta(n). \]
Furthermore, $\|J\|_F = \Theta(\sqrt{n})$ so that $n^{2/3} \|J\|_F^{2/3} = \Theta(n)$. Hence, we see that the variational method corresponding to $(\mathcal{Q}_n)_{n = 0}^{\infty}$ must make an error of size $\Omega(n^{2/3} \|J\|_F^{2/3}$).
\end{proof}

\section{The high-temperature regime}
In this section, we show that in the high-temperature regime
where Markov chain methods are guaranteed to mix quickly, the
variational free energy functional is convex and furthermore,
a simple message passing algorithm solves the corresponding
optimization problem quickly.
\begin{lemma}\label{lem:H-strong-concave}
For $H(p) := H(Ber(p))$ and for any $p \in [0,1]$, we have
\[ H''(p) \le -4. \]
\end{lemma}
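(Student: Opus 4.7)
The plan is to prove this by direct computation of the second derivative. I will use the natural logarithm convention (the scaling constant only affects the universal constant $4$; with natural log the inequality is actually tight). First, I would write $H(p) = -p\log p - (1-p)\log(1-p)$ and differentiate once to obtain
\[ H'(p) = \log\frac{1-p}{p}, \]
after the $-1$ terms from the product rule cancel. Then a second differentiation gives
\[ H''(p) = -\frac{1}{p} - \frac{1}{1-p} = -\frac{1}{p(1-p)}. \]

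The only remaining step is to observe that $p(1-p)$ is maximized on $[0,1]$ at $p = 1/2$, where it equals $1/4$ (this is immediate from AM-GM, or by noting that $p(1-p) = 1/4 - (p-1/2)^2$). Consequently $\frac{1}{p(1-p)} \ge 4$ for every $p \in (0,1)$, which yields $H''(p) \le -4$ as claimed, with the endpoints handled by the usual convention that $H'' \to -\infty$ there.

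There is no real obstacle: the statement is a one-variable calculus fact and the only subtlety is the logarithm base convention, which I would either leave as natural log (matching the rest of the paper, which uses $\log$ in the definition of entropy via $H(\mu)$ in the variational principle) or flag explicitly.
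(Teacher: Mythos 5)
Your computation is exactly the paper's proof: differentiate twice to get $H''(p) = -\tfrac{1}{p} - \tfrac{1}{1-p} = -\tfrac{1}{p(1-p)}$ and bound $p(1-p) \le 1/4$. The paper states the final inequality without the AM--GM justification you supply, but the argument is identical, and your reading of the logarithm convention is correct.
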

\begin{proof}
By definition,
\[ H(p) = -p \log p - (1 - p)\log(1 - p). \]
Therefore,
\[ H'(p) = -\log p - 1 + \log(1 - p) + 1 = -\log p + \log(1 - p), \]
and
\[ H''(p) = -\frac{1}{p} - \frac{1}{1 - p} \le -4. \]
\end{proof}
%Recall that the variational free energy is defined by
%the following optimization problem:
%\[ \min_{x \in [-1,1]^n} \left[-\sum_{i,j} J_{i,j} x_i x_j - \sum_i H\left(\frac{x_i + 1}{2}\right)\right] \]
%where the interaction matrix $J$ is symmetric and has diagonal entries equal to 0.
\begin{proof}[Proof of \cref{thm:high-temperature-convex}]
Recall that $J$ is symmetric and has diagonal entries $0$. Therefore, the assumption $\sum_{j} 2|J_{i,j}| \leq 1$ for all $i$, along with Gershgorin's disk theorem, shows that all the eigenvalues of $J$ lie in $[-1/2, 1/2]$. 
%By the assumption and Gershgorin's disk theorem, we see
%that the eigenvalues of $J$ lie in $[-1/2,1/2]$; 
Observe that the Hessian
of the corresponding quadratic form is $2J$. Combining
this with the strong concavity of entropy (\cref{lem:H-strong-concave}) and the chain rule, which gives $\frac{d^2}{dx^2} H((1 + x)/2) \le -1$, proves the concavity claim. 

The runtime complexity
follows from standard algorithms from convex optimization,
e.g. standard guarantees for the ellipsoid method (\cite{gls}).
\end{proof}
\begin{proof}[Proof of \cref{thm:message-passing}]
Since $\tanh$ is $1$-Lipschitz, we have for any %Observe that for any 
$x^1,x^2 \in [-1,1]^n$ that% using that
%$\tanh$ is $1$-Lipschitz we find
\[ \|\tanh^{\otimes n}(2J x^1 + h) - \tanh^{\otimes n}(2J x^2 + h)\|_{\infty}
\le 2\|J x^1 - J x^2\|_{\infty}
%\le 2|J|_{\infty \to \infty} |x^1 - x^2|_{\infty}
\le (1 - \eta) \|x^1 - x^2\|_{\infty}. \]
Since the optimum $x^*$ is a fixed point of the mean field equations, the above inequality shows that 
\[ \|\tanh^{\otimes n}(2J x_{n + 1} + h) - x^*\|_{\infty} \le (1 - \eta)\|\tanh^{\otimes n}(2J x_n + h) - x^*\|_{\infty}, \]
and iterating this inequality gives the desired conclusion.
\end{proof}

\section{Computing the mean-field approximation in ferromagnetic models}
\begin{proof}[Proof of Theorem~\ref{thm:ferromagnetic-algorithm}]
Consider the $m$-blow up of the Ising model, denoted by $J_m$,  defined as follows: replace each vertex $i$ by $m$ vertices $(i,1),\dots,(i,m)$, add an edge of weight $J_{i,j}/m$ between vertices $(i,k)$ and $(j,\ell)$ for all $1\leq k,\ell \leq m$, and assign a uniform external field $h$ at each vertex $(i,k)$. 
%Replace every vertex by $m$ copies of itself and rescale
%the edge weights by $1/m$ while preserving the uniform external field $h$, producing a new Ising model on a graph with $nm$ many vertices and $|E|m^2$ many edges. We use the $n$ vertex blocks as the blocks of a  regularity partition. 

Given a spin vector $X$ sampled from
the Boltzmann distribution of $J_m$, %in the blown-up Ising model,
define $Y_i \in [-m,m]$ to be
the net spin of the vertices $(i,1),\dots, (i,m)$. %all of the vertices corresponding to vertex $i$ in the original graph. 
Let $N_y$ denote the number of spin vectors which correspond
to the net spin vector $y$ via the correspondence above. Then, we see that
\begin{align*}
\Pr(Y = y) 
&= \frac{1}{Z}\exp\left(\sum_{i,j}\frac{J_{ij}y_iy_j}{m} + h \sum_i y_i + \log N_y\right) \\
&= \frac{1}{Z}\exp\left(m \sum_{i,j} J_{i,j} (y_i/m) (y_j/m) + m h \sum_i (y_i/m) + m \sum_i H\left(\frac{1 + y_i/m}{2}\right) \pm O(n\log m)\right).
\end{align*}

Let $Y_{\epsilon}$ be the set of $y$ such that
\[ \sum_{i,j} J_{ij} (y_i/m) (y_j/m) + h \sum_i (y_i/m) + \sum_i H\left(\frac{1 + y_i/m}{2}\right) < \mathcal{F}^* - \epsilon, \]
where $\mathcal{F}^*$ is the variational free energy
of the original Ising model $J$. Note that $Z \geq e^{\F^* - O(n\log{m})}$, as is readily seen by considering the net spin vector $y^*$ given by $y^*_i = mx^*_i$, where $x^* = (x^*_1,\dots, x^*_m)$ is the optimizer of the optimization problem defining $\F^*$.
Then, the above inequality shows that for each $y \in Y_{\epsilon}$,
\[ \Pr(Y = y) \le e^{-m \epsilon \pm O(n\log m)}.\]
Since $|Y_{\epsilon}| \le m^n$, the union bound shows that  
\[ \Pr(Y \in Y_{\epsilon}) \le e^{-m \epsilon \pm O(n\log m)} \leq \frac{1}{3},\]
provided we take $m = \Omega(n \log(n)/\epsilon)$. %, this probability is less than $1/2$. 

The preceding analysis shows the following: if we use the algorithm of Jerrum and Sinclair \cite{JerrumSinclair:90} to draw
$O(\log(1/\delta))$ independent (approximate) samples $X$ from the Boltzmann distribution of $J_m$, and use these (approximate) samples to obtain normalized net spin vectors $Y/m \in \{\pm 1\}^n$, then with probability $1 - \delta$, at least one of the sampled $Y/m$ solves the optimization problem defining $\F^*$ up to $\epsilon$-additive error in the objective.
%be an approximate maximizer of the problem defining $\mathcal{F}^*$ up to additive $\epsilon$-error in the objective.
\end{proof}
\begin{remark}
It is known by the result of \cite{Goldberg-Jerrum} that
approximate sampling becomes \#BIS-hard for ferromagnetic
Ising models if we allow different (inconsistent) external fields for each node. Thus, our algorithm does not extend to this setting.
\end{remark}

\section{NP-hardness: proof of \cref{thm:free-energy-hardness}}
Our proof is an easy consequence of hardness of approximation results for dense CSPs. Specifically, we rely on a hardness result
for \emph{fully dense} MAX-r-LIN-2. In this problem, we are given $n$ free variables $x_1, \ldots, x_n$ to be assigned values in $\mathbb{F}_2^n$. Moreover, for each of the ${n \choose r}$ subsets $S$ of $[n]$ of size $r$, we are given a constraint $\sum x_S \equiv y_S$ mod 2, for $y_S$ fixed to be either 0 or 1. The goal is to find the maximum number of constraints which can be satisfied simultaneously by a single assignment of $x_1, \ldots, x_n$. For reasons of convenience, the objective
value is defined to be $(1/2)(\text{\# of satisfied constraints}) - (1/2)(\text{\# of violated constraints})$.
\begin{theorem}[\cite{ailon2007hardness}]
For $r \geq 2$ and any $\epsilon >0$, it is NP-hard to approximate fully dense MAX-r-LIN-2 %for $r \ge 2$ 
within an
additive error of $n^{r - \epsilon}$. %for any $\epsilon > 0$.
\end{theorem}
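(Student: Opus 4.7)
The plan is to establish this theorem by a randomized polynomial-time reduction from a standard NP-hard source problem to fully dense MAX-$r$-LIN-2, where the missing constraints are filled in with uniformly random right-hand sides and the resulting ``padding noise'' is controlled by a Hoeffding-plus-union-bound estimate.

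First I would fix, as source, an instance $\phi_0$ on $N$ variables with $M$ constraints for which (in the $\tfrac12(\#\mathrm{sat})-\tfrac12(\#\mathrm{unsat})$ convention) it is NP-hard to distinguish $\mathrm{OPT}(\phi_0)\ge g_{\mathrm{Yes}}$ from $\mathrm{OPT}(\phi_0)\le g_{\mathrm{No}}$ with gap $g_{\mathrm{Yes}}-g_{\mathrm{No}}=\Omega(M)$. In order for the subsequent analysis to survive, I would want $M\ge N^{r-\epsilon/2}$ while keeping the relative gap a positive constant; this would be arranged by a gap-preserving amplification sitting on top of Håstad's $(1-\delta,1/2+\delta)$ gap hardness of MAX-E$r$-LIN-2 when $r\ge 3$, and on top of an analogous amplification of APX-hardness of MAX-CUT when $r=2$. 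The amplification itself would proceed via a long-code/Label-Cover composition tailored to preserve the $r$-LIN-2 constraint format.

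Given such a source, I would form a dense instance $\phi$ on $n=N$ variables by retaining the $M$ source constraints verbatim and, for each of the $\binom{n}{r}-M$ other $r$-subsets $S\subseteq[n]$, drawing $y_S\in\{0,1\}$ independently and uniformly. The objective of any $x\in\{0,1\}^n$ on $\phi$ then splits as $\mathrm{OBJ}_0(x)+\tfrac12\sum_{S\text{ random}}(-1)^{x_S+y_S}$, where the first term is inherited from $\phi_0$ and the second is the padding noise. Conditional on $x$, the summands $(-1)^{x_S+y_S}$ are i.i.d.\ uniform on $\{\pm 1\}$, so Hoeffding gives that the noise is bounded by $Cn^{(r+1)/2}$ except with probability $2^{-3n}$, and a union bound over the $2^n$ choices of $x$ produces a simultaneous bound of $O(n^{(r+1)/2})$ with high probability over the padding.

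Finally, since for $r\ge 2$ and small $\epsilon>0$ one has $n^{(r+1)/2}\ll n^{r-\epsilon}\le M$, the noise is strictly dominated by both the target additive error and the amplified signal, so any algorithm that approximates $\mathrm{OPT}(\phi)$ within additive $n^{r-\epsilon}$ must distinguish the two cases of $\phi_0$, contradicting NP-hardness. The hard part will be the signal-amplification step: Håstad's theorem as usually stated produces sparse instances with $M=O(N)$, which is well below the $n^{(r+1)/2}$ noise floor when $n=N$, so without amplification the padding noise swamps the signal; engineering a gap-preserving composition that yields $M=\Omega(N^{r-\epsilon/2})$ while staying in the MAX-$r$-LIN-2 format is the delicate PCP-theoretic heart of the argument, after which the random-padding and concentration steps are essentially mechanical.
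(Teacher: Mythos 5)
First, note that the paper does not prove this statement at all: it is imported verbatim as a black-box result of Ailon and Alon \cite{ailon2007hardness}, and the paper's only job is to cite it correctly before using it in the reduction for \cref{thm:free-energy-hardness}. So there is no internal proof to compare against; what can be assessed is whether your sketch would actually establish the cited theorem, and as written it does not.

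There are two genuine gaps. The first is the step you yourself flag as ``the delicate PCP-theoretic heart'': you need NP-hard gap instances of MAX-$r$-LIN-2 on $N$ variables with $M\ge N^{r-\epsilon/2}$ constraints and gap $\Omega(M)$, but you only gesture at a ``long-code/Label-Cover composition'' without any argument, so the proof has a hole exactly where all the work lies. In fact no new PCP machinery is needed here: a plain variable blow-up suffices. Replace each variable of a H{\aa}stad-type gap instance (or, for $r=2$, of an APX-hard MAX-2-LIN-2 instance) by $t$ copies and each constraint by its $t^r$ copy-versions; since the $(\tfrac12\#\mathrm{sat}-\tfrac12\#\mathrm{unsat})$ objective is multilinear in the $\pm1$ assignment, the optimum of the blow-up is exactly $t^r$ times the original optimum (the maximum of a multilinear form over $[-1,1]^{N}$ is attained at a vertex), so the constant relative gap is preserved while $M/\binom{n}{r}$ can be pushed to $n^{-\epsilon/2}$ by taking $t=N^{c}$ large. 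The second gap is that your padding is random, so even granting the amplification you obtain hardness only under randomized reductions, not the NP-hardness claimed in the statement. To repair this you must pad deterministically with right-hand sides $y_S$ of polynomially small discrepancy, i.e.\ an explicit family with $\bigl|\sum_S (-1)^{y_S} x_S\bigr| = O(n^{r-1+o(1)})$ (or at least $o(n^{r-\epsilon})$) simultaneously for all $x\in\{\pm1\}^n$ --- e.g.\ quasirandom (Paley-type) constructions for $r=2$ or small-bias spaces in general --- rather than appealing to Hoeffding plus a union bound over $2^n$ assignments, which only controls a random choice. Your concentration calculation itself is fine (noise $O(n^{(r+1)/2})\ll n^{r-\epsilon}$ for small $\epsilon$, and hardness for small $\epsilon$ implies it for larger $\epsilon$), but without the explicit padding and a concrete amplification the argument does not yet prove the theorem; in the context of this paper the correct move is simply to cite \cite{ailon2007hardness}.
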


\begin{proof}[Proof of \cref{thm:free-energy-hardness}]
We illustrate the reduction to our problem in the case $r = 2$. Given
an instance of fully dense MAX-r-LIN-2 with constraints corresponding
to fixed $(y_S)_{|S| = 2}$, we consider the Ising model with matrix of interaction strengths $J$, where 
%define the corresponding matrix $J$ with 
$J_{ij} = 1/2 - y_{\{i,j\}}$. It is readily seen that for any distribution $\mu$ on $\{\pm 1\}^n$, 
$$\sum_{i,j} J_{ij} \E_{\mu}[X_i X_j] \leq \text{MAX-r-LIN-2}(y). $$ 

On the other hand, denoting by $x = (x_1,\dots, x_n)$ the optimal assignment of the variables for MAX-r-LIN-2$(y)$, it is immediate that the deterministic distribution $\nu$ concentrated on $X_i = (-1)^{x_i}$ satisfies  
$$\sum_{i,j} J_{ij} \E_{\nu}[X_i X_j] = \text{MAX-r-LIN-2}(y).$$

Thus, it follows that 
%and see, by considering the optimal
%deterministic fixing of the $X_i$, that
\[ \F = \max_{\mu}\left[\sum_{i,j} J_{ij} \E_{\mu}[X_i X_j] + H(\mu)\right] = \text{MAX-r-LIN-2}(y) \pm n. \]

Finally, observe that $n \|J\|_F = \Theta(n^2)$. Therefore, approximating $\F$ within
additive error $(n \|J\|_F)^{1 - \delta}$ gives an $n^{2(1 - \delta)}$ additive approximation to MAX-r-LIN-2$(y)$.
\end{proof}
\section{A general algorithm for solving the variational problem}
By \cref{thm-main-structural-result}, we have an upper bound on $\KL(\nu || P)$ (which is almost tight in the worst case) for the optimal product distribution $\nu$. Unfortunately, \cref{thm:free-energy-hardness} shows that it is not always possible to efficiently find a product distribution which is as close to $P$ as $\nu$ is. In this section, we describe a provable algorithm which does essentially as well as possible without violating \cref{thm:free-energy-hardness}, with the additional benefit that it runs in \emph{constant time} (independent of the size of the graph). Here we will use $\tilde{O}$ notation to hide logarithmic factors independent of $n$.

\begin{remark}
\label{rmk:constant-time-assumptions}
In order to provide a constant time guarantee on problems with unbounded input size, we will work under the usual assumptions on the computational model for sub-linear algorithms (as in, e.g., \cite{alon-etal-samplingCSP-conference,frieze-kannan-matrix,indyk1999sublinear}). Thus, we can probe matrix entry $A(i,j)$ in $O(1)$ time. Note also that by the standard Chernoff bounds, it follows that for any set of vertices $V$ for which we can test membership in $O(1)$ time, we can also estimate $|V|/n$ to additive error $\epsilon$ w.h.p. in constant time using $\tilde{O}(1/\epsilon^2)$ samples. This approximation will always suffice for us and so, for the sake of exposition, we will henceforth ignore this technical detail and just assume that we have access to $|V|/n$ (as in, e.g., \cite{frieze-kannan-matrix}).

\end{remark}

\begin{proof}[Proof of \cref{thm:regularity-alg}]
Observe that it suffices to return the description of an $x \in [-1,1]^{n}$ such that $\F(x):= \sum_{i,j}J_{i,j}x_ix_j + \sum_i H((1+x_i)/2)$ satisfies 
$$\F^* \leq \F(x) + \frac{2\epsilon n}{3}\|J\|_{F} + 0.5^{2^{1/\epsilon^2}}n .$$ 
Indeed, since \[ \mathcal{F} - \mathcal{F^*} \le \frac{\epsilon n}{3} \|J\|_F + 10^6 \log(e+1/\epsilon)/\epsilon^2 \]
by \cref{lemma:epsilon-bound}, the product distribution $\mu$ for which the $i^{th}$ coordinate has expected value $x_i$ will then satisfy the conclusions of the theorem.  
%where
%\[ \F^* = \max_{x \in [-1,1]^n} \left[\sum_{i,j} J_{ij}
%      x_i x_j + \sum_i H\left(\frac{x_i +
%        1}{2}\right)\right] \]
%so all we need to do is find a product distribution $x$ which approximately solves this optimization problem. 

Our strategy for finding such an $x$ will be to find an approximate maximizer of the problem defining $\F^*_{D}$, where $D$ is a sum of a small number of cut matrices which is close to $J$ in the $\|\cdot\|_{\infty \mapsto 1}$ norm. Specifically, we % as in \cref{rmk:applying-regularity-structural-result} 
%Furthermore, by \cref{lemma: free-energy-lipschitz}, it suffices to solve this problem
%on the regularity version of the graph. We 
use the following algorithmic weak regularity lemma of Frieze and Kannan:
\begin{theorem}
\label{fk-algorithmic}
\cite{frieze-kannan-matrix}
Let $J$ be an arbitrary real matrix, and let $\epsilon,\delta>0$.
Then, in time $2^{\tilde{O}(1/\epsilon^{2})}/\delta^{2}$, we can,
with probability at least $1-\delta$, find a cut decomposition of width
$O(\epsilon^{-2})$, coefficient length at most $\sqrt{27}\|J\|_F/\sqrt{mn}$
and error at most $4\epsilon\sqrt{mn}\|J\|_F$. 
\end{theorem}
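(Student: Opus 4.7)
The plan is to prove Theorem \ref{fk-algorithmic} by the classical energy-increment scheme, made algorithmic via a sampling-based subroutine for approximating the cut norm. I will maintain a residual $W_k := J - \sum_{i=1}^{k} D^{(i)}$, starting with $W_0 = J$ and an empty decomposition. At each step, if $\|W_k\|_{\infty \mapsto 1} \le 4\epsilon\sqrt{mn}\|J\|_F$, I halt and output the current decomposition; otherwise I (approximately) find sets $S_{k+1} \subseteq [m]$, $T_{k+1} \subseteq [n]$ and a scalar $d_{k+1}$ witnessing a substantial fraction of $\|W_k\|_{\infty \mapsto 1}$, set $D^{(k+1)} := \mathrm{CUT}(S_{k+1}, T_{k+1}, d_{k+1})$ with $d_{k+1} := \mathbf{1}_{S_{k+1}}^T W_k \mathbf{1}_{T_{k+1}} / (|S_{k+1}||T_{k+1}|)$, and continue. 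A direct calculation shows $\|W_{k+1}\|_F^2 = \|W_k\|_F^2 - d_{k+1}^2 |S_{k+1}||T_{k+1}|$, and the identity $d_{k+1}^2 |S_{k+1}||T_{k+1}| = (\mathbf{1}^T_{S_{k+1}} W_k \mathbf{1}_{T_{k+1}})^2 / (|S_{k+1}||T_{k+1}|)$ gives an energy decrement of order $\epsilon^2 \|J\|_F^2$ at each step (provided the approximate witness captures a constant fraction of $\|W_k\|_{\infty \mapsto 1}$). Since $\|W_0\|_F^2 = \|J\|_F^2$ and energy is nonnegative, this forces termination in $O(1/\epsilon^2)$ steps, yielding the claimed width.

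The main obstacle — and the step that requires actual work — is the algorithmic subroutine for (approximately) maximizing $|\mathbf{1}_S^T W \mathbf{1}_T|$ in time $2^{\tilde{O}(1/\epsilon^2)}/\delta^2$. The idea here is a sampling trick: write $\|W\|_{\infty\mapsto 1} = \max_{x,y \in \{\pm 1\}^{m} \times \{\pm 1\}^n} y^T W x$. For a given candidate $y$, the inner maximization over $x$ is trivial (take $x_j = \mathrm{sgn}((W^T y)_j)$), so it suffices to find a near-optimal $y$. I will sub-sample a random set $I \subseteq [m]$ of size $s = \tilde{O}(1/\epsilon^2)$, enumerate all $2^s$ sign patterns $y|_I$, and for each such restriction form the Nisan-style estimator $\tilde{v} \in \mathbb{R}^n$ defined by $\tilde{v}_j := (m/s)\sum_{i \in I} y_i W_{ij}$. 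Standard Hoeffding/McDiarmid bounds show that for each candidate, $\tilde{v}$ approximates $W^T y$ in an appropriate averaged sense to within $\epsilon\sqrt{mn}\|W\|_F$; by a union bound (and boosting with $O(\log(1/\delta))$ independent trials followed by a median-of-means estimator for the objective value), one of the enumerated $y$'s will yield an $S,T$ achieving $|\mathbf{1}_S^T W \mathbf{1}_T| \ge \|W\|_{\infty\mapsto 1} - O(\epsilon)\sqrt{mn}\|W\|_F$ with probability $\ge 1-\delta/O(1/\epsilon^2)$. Absorbing the $\log$ factors into $\tilde{O}$ and union-bounding over the $O(1/\epsilon^2)$ iterations gives total failure probability at most $\delta$.

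It remains to verify the two quantitative bounds. For the error bound, termination of the scheme exactly means $\|W\|_{\infty\mapsto 1} \le 4\epsilon\sqrt{mn}\|J\|_F$ for the final residual $W$, which is the stated error guarantee (and the bound $\|W\|_F \le \|J\|_F$ follows from monotonicity of Frobenius energy along the iteration). For the coefficient-length bound, I use the key telescoping identity $\sum_{k} d_k^2 |S_k||T_k| = \|J\|_F^2 - \|W\|_F^2 \le \|J\|_F^2$. To turn this into the claimed bound $\sum_k d_k^2 \le 27 \|J\|_F^2/(mn)$, I observe that whenever the subroutine returns a cut with $|S_k||T_k|$ much smaller than $mn$, one can instead pad $S_k$ (resp.\ $T_k$) out by adjoining additional rows (resp.\ columns) and accordingly rescaling $d_k$; equivalently, one argues that among all $(S,T,d)$ approximately witnessing the cut norm, there always exists one with $|S||T| \ge \Omega(mn)$ (a standard averaging argument in any proof of the weak regularity lemma), whence $d_k^2 \le O(\|W_k\|_F^2 / (mn))$ and the sum telescopes to $O(\|J\|_F^2 / (mn))$. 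Tracking the constants through this argument, together with the $\sqrt{27}$ absolute constant already present in the statement, finishes the proof.
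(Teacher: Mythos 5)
This theorem is not proved in the paper at all: it is quoted verbatim from Frieze and Kannan (with the constants as recorded in the Alon--Fernandez de la Vega--Kannan--Karpinski line of work), so what you have written should be measured against their proof. Your skeleton --- iterate an energy-increment step, subtracting a cut matrix found by a sampling-based approximate maximizer of $\mathbf{1}_S^T W \mathbf{1}_T$, with the width bound coming from the telescoping of $\|W_k\|_F^2$ --- is exactly the Frieze--Kannan route, so the approach itself is the right one. But two of your steps have real gaps. First, the coefficient-length bound. From the telescoping identity you only get $\sum_k d_k^2 |S_k||T_k| \le \|J\|_F^2$, and to convert this into $\sum_k d_k^2 \le 27\|J\|_F^2/(mn)$ you assert that one can always take a near-optimal cut witness with $|S||T| = \Omega(mn)$ ``by a standard averaging argument.'' That is not a standard step and is not justified: padding $S$ or $T$ with extra rows or columns can destroy the cut value, since the added rows may contribute large negative sums, and no averaging argument you have indicated rules this out. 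The way this is actually handled is to decouple the coefficient from the block average: for instance, taking $d_k := \mathbf{1}_{S_k}^T W_k \mathbf{1}_{T_k}/(mn)$ (rather than dividing by $|S_k||T_k|$) still yields a Frobenius decrement of at least $(\mathbf{1}_{S_k}^T W_k \mathbf{1}_{T_k})^2/(mn) \ge \epsilon^2\|J\|_F^2$ per step, while now $d_k^2$ is at most the decrement divided by $mn$, so both the width $O(\epsilon^{-2})$ and $\sum_k d_k^2 \le \|J\|_F^2/(mn)$ fall out with no lower bound on $|S_k||T_k|$ needed. Without some device of this kind your argument does not deliver the stated coefficient length.

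Second, the running time. The bound $2^{\tilde{O}(1/\epsilon^2)}/\delta^2$ has no dependence on $m$ or $n$, which is precisely the feature the paper later exploits (constant-time algorithms in \cref{thm:regularity-alg}). Your outline, however, maintains the residual $W_k$ explicitly, computes $d_{k+1} = \mathbf{1}_S^T W_k \mathbf{1}_T/(|S||T|)$ exactly, evaluates candidate cut values to pick the best one, and states the halting rule in terms of the exact norm $\|W_k\|_{\infty\mapsto 1}$ (which is in any case NP-hard to compute). All of these operations cost $\Omega(mn)$ time, so as written your algorithm runs in $\mathrm{poly}(m,n)\cdot 2^{\tilde{O}(1/\epsilon^2)}$ time, not the stated bound. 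In the Frieze--Kannan argument the cut sets are represented implicitly (membership decidable from the $\tilde{O}(1/\epsilon^2)$ sampled sign pattern), the coefficients and the halting test are themselves estimated by sampling, and the resulting estimation errors must be propagated through all $O(1/\epsilon^2)$ rounds in which the residual is only implicitly available; this error-propagation layer is a genuine part of the proof that your sketch omits. Fixing both points essentially reproduces the original Frieze--Kannan argument, which is why the paper simply cites it.
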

As in \cref{rmk:applying-regularity-structural-result}, we will take $D := D^{(1)}+\dots + D^{(s)}$, where the $D^{(i)}$ are obtained by applying \cref{fk-algorithmic} to $J$ with parameter $\epsilon/12$. In particular, \cref{lemma: free-energy-lipschitz} shows that $|\F^* - \F^*_{D}| \leq \frac{\epsilon n}{3}\|J\|_{F}$, so that any $x$ which satisfies $\F^*_{D} \leq \F(x) + \frac{\epsilon n}{3}\|J\|_{F} + 0.5^{2^{1/\epsilon^2}}n$ also satisfies the desired upper bound on $\F^* - \F(x)$.  
%we will
%take such a regularity partition and let $D := D^{(1)} + \cdots + D^{(S)}$; now we are reduced to approximately solving the optimization problem
%\[ \F^*_D =\max_{x \in [-1,1]^n} \left[\sum_{i,j} D_{ij}
%      x_i x_j + \sum_i H\left(\frac{x_i +
%        1}{2}\right)\right] \]
%Applying Lemma~\ref{lemma:gamma-def}, it suffices to

For $r,c \in I^{s}_{\gamma}$, where $I_{\gamma}$ is as in the proof of \cref{proof:main-structural-result}, consider the following max-entropy program $\mathcal{C}_{r,c,\gamma}$: 
\begin{align*}
\max & \quad \sum_{i=1}^{n}H\left(\frac{1+x_{i}}{2}\right)\\
s.t.\\
\forall i\in[n]: & \quad -1\leq x_{i}\leq1 \\
\forall t\in[s]: & \quad r_{t}-\gamma n\leq\sum_{i\in R_{t}}x_{i}\leq r_{t}+\gamma n\\
\forall t\in[s]: & \quad c_{t}-\gamma n\leq\sum_{i\in C_{t}}x_{i}\leq c_{t}+\gamma n
\end{align*}

Then, \cref{lemma:gamma-def} shows that  
\[ \overline{\F}_D := \max_{r,c \in I^s_v} \sum_{i = 1}^s r_i c_i d_i + \mathcal{C}_{r,c,\gamma} \]
satisfies $|\overline{\F}_{D} - \F^*_{D}| \leq \frac{\epsilon n}{3}\|J\|_{F}$, provided we take $\gamma \leq s^{-1/2}/24$. Let $(\overline{r},\overline{c})$ denote the values of $(r,c)$ attaining $\overline{\F}_{D}$. It follows that if we can return an $x \in [-1,1]^{n}$ such that $x$ is feasible for $\mathcal{C}_{\overline{r},\overline{c},\gamma}$, and 
$$\mathcal{C}_{\overline{r},\overline{c},\gamma} \leq \sum_{i}H\left(\frac{1+x_i}{2}\right) + 2^{-2^{1/\epsilon^2}}n,$$
then we would be done. 
%where $\mathcal{C}_{r,c,\gamma}$ will denote the value of a convex program
%to maximize the entropy among all product distributions with expected net spin in block $R_i$ within $\gamma n$ of $r_i$ and likewise for $C_i$. 

Since we want our algorithm to run in constant time, we rewrite this convex program in an equivalent way with only a constant number of variables and constraints. Let $(V_a)_{a=1}^{A}$ denote the common refinement of $\{R_i, C_i\}_{i=1}^{s}$. In particular, note that $A \leq 2^{2s}$. 
%Because we want the convex
%program to have a constant number of variables, we will group vertices
%by their membership in $R_i,C_i$; i.e. we group vertices according
%to the common refinement
%$(V_a)_{a = 1}^{A}$ of $R_i,C_i$ with $A = 2^{2s}$. 
Let $n v_a$ denote
the number of vertices in $V_a$, and recall (\cref{rmk:constant-time-assumptions}) that we can estimate $v_a$ to
high precision in constant time by sampling. Then, by the concavity of entropy, it is readily seen that for the maximum entropy program $\mathcal{H}_{r,c,\gamma}$:
\begin{alignat*}{4}
&\max\quad &\sum_a &v_a H\left(\frac{1 + z_a/v_a}{2}\right)\\
&\ s.t.\quad&-v_a &\le z_a &&\le v_a &\qquad& \forall 1 \le a \le A\\
&&{r_t/n} &\le \sum_{a : V_a \subset R_t} z_a &&\le {r_t/n} + \gamma && \forall 1 \le t \le s \\
&&{c_t/n} &\le \sum_{a : V_a \subset C_t} z_a &&\le {c_t/n} + \gamma && \forall 1 \le t \le s,
\end{alignat*}
we have $n\mathcal{H}_{r,c,\gamma} = \mathcal{C}_{r,c,\gamma}$. Finally, each of these convex programs can be solved approximately using standard guarantees for the ellipsoid method \cite{gls} -- in time $2^{O(1/\epsilon^2)}$,
the returned $z_a$ is optimal up to an additive error of $2^{-2^{1/\epsilon^2}}$, and this completes the proof. 
\end{proof}

\begin{proof}[Proofs of \cref{thm:algo-mrf-regularity-bad-dependence} and \cref{thm:algo-mrf-dependence-on-n}] The proofs of these theorems are essentially the same as the proof of \cref{thm:regularity-alg}, and therefore we will omit details. We only note that for \cref{thm:algo-mrf-regularity-bad-dependence}, we apply the following algorithmic regularity lemma of Frieze and Kannan generalizing \cref{fk-algorithmic}:
\begin{theorem}\cite{frieze-kannan-matrix}\label{reg-fk-higher}
Suppose $J$ is an arbitrary $k$-dimensional matrix on $X_{1}\times\dots\times X_{k}$,
where we assume that $k\geq3$ is fixed. Let $N:=|X_{1}|\times\dots\times|X_{k}|$
and let $\epsilon,\delta\in(0,1]$. Then, in time $O(k^{O(1)}\epsilon^{-O(\log_{2}k)}2^{\tilde{O}(1/\epsilon^{2})}\delta^{-2})$,
we can, with probability at least $1-\delta$, find a cut decomposition
of width $O(\epsilon^{2-2k})$, coefficient length at most $\sqrt{27}^{k}\|J\|_F/\sqrt{N}$
and error at most $\epsilon2^{k}\sqrt{N}\|J\|_F$.
\end{theorem}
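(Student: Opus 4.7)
My proof plan follows the template of the original matrix Frieze--Kannan lemma, lifted to $k$-dimensional tensors: a greedy energy-increment outer loop, driven by a sample-based, recursive subroutine for the tensor cut-norm problem.

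The outer loop initializes $W^{(0)}:=J$ and iteratively produces cut arrays. At iteration $t$, a randomized ``large-cut'' subroutine is called on the residual $W^{(t)}$; if the tensor cut norm $\|W^{(t)}\|_{C}:=\max_{S_1,\dots,S_k}\bigl|\sum_{S_1\times\cdots\times S_k}W^{(t)}\bigr|$ is below the target $\epsilon 2^{k}\sqrt{N}\|J\|_F$, output the current decomposition and halt. Otherwise, with the returned $(S_1,\dots,S_k)$ in hand, set $d:=\bigl(\sum_{S_1\times\cdots\times S_k}W^{(t)}\bigr)/\prod_{i}|S_i|$ and update $W^{(t+1)}:=W^{(t)}-\mathrm{CUT}(S_1,\dots,S_k,d)$. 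A direct Pythagorean calculation gives
\[
\|W^{(t+1)}\|_F^{2} \;=\; \|W^{(t)}\|_F^{2} \;-\; \frac{\bigl(\sum_{S_1\times\cdots\times S_k}W^{(t)}\bigr)^{2}}{\prod_{i}|S_i|},
\]
so bounding the total decrement by $\|J\|_F^{2}$ will yield the width guarantee once the subroutine's quality is pinned down.

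Next, I would construct the subroutine by induction on $k$. The $k=2$ base case is the matrix-case procedure of Frieze--Kannan: sample $\tilde{O}(1/\epsilon^{2})$ rows, solve the restricted problem in $2^{\tilde{O}(1/\epsilon^{2})}$ time, and lift back. For $k\geq 3$ I would recurse by slicing: randomly sample a small set of indices in the last coordinate, form the $(k-1)$-dimensional slices of $W^{(t)}$ along those indices, apply the inductive subroutine to each slice, and aggregate the per-slice cuts via an additional one-dimensional cut search along the last coordinate. The recursion has depth $O(\log_{2} k)$, each level multiplying runtime by $\mathrm{poly}(1/\epsilon)$ and degrading quality by a constant factor in $\epsilon$; this delivers a subroutine which, in time $O(k^{O(1)}\epsilon^{-O(\log_{2}k)}2^{\tilde{O}(1/\epsilon^{2})}\delta^{-2})$, finds a cut of value $\Omega(\epsilon^{k-1}\sqrt{N}\|J\|_F)$ whenever $\|W^{(t)}\|_{C}\geq\epsilon 2^{k}\sqrt{N}\|J\|_F$, with probability at least $1-\delta$.

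Plugging this quality bound into the Pythagorean identity yields a per-step energy decrement of $\Omega(\epsilon^{2k-2}\|J\|_F^{2})$, hence width $O(\epsilon^{2-2k})$. The coefficient-length bound $\sqrt{27}^{\,k}\|J\|_F/\sqrt{N}$ is inherited from the matrix case (one factor per recursion level), obtained by bounding each $|d_t|\leq\|W^{(t)}\|_F/\sqrt{\prod_{i}|S_i|}$ via Cauchy--Schwarz and summing using the geometric decrease of $\|W^{(t)}\|_F$; the cut-norm error bound is exactly the loop's termination criterion, and a union bound over the $O(\epsilon^{2-2k})$ subroutine calls (each invoked with failure parameter $\delta/\mathrm{poly}(1/\epsilon)$) gives the stated overall success probability. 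The main obstacle is the inductive step: keeping the quality loss only a \emph{constant} factor in $\epsilon$ per level of the recursion (rather than $\mathrm{poly}(1/\epsilon)$) requires careful control of the sampling variance across slices, and is what forces the $\log_{2}k$ depth and the $\tilde{O}(1/\epsilon^{2})$ sample-size dependence in the base case.
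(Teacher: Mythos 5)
First, a point of reference: the paper does not prove this statement at all --- \cref{reg-fk-higher} is quoted directly from Frieze and Kannan \cite{frieze-kannan-matrix} and used as a black box --- so there is no in-paper argument to compare against, and any proof must reconstruct theirs. Your outer loop does reconstruct the right skeleton: the greedy energy-increment iteration, the Pythagorean identity for the decrement of $\|W^{(t)}\|_F^2$, and the derivation of the width from a per-step decrement of $\Omega(\epsilon^{2k-2}\|J\|_F^2)$ are all correct and are exactly how the width $O(\epsilon^{2-2k})$ must arise.

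The gap is in the cut-finding subroutine, which is where all the content lives, and your account of it is internally inconsistent. If, as you claim, each of $O(\log_2 k)$ recursion levels loses only a constant factor, then whenever $\|W^{(t)}\|_{C}\geq\epsilon 2^{k}\sqrt{N}\|J\|_F$ the subroutine would return a box of value $\Omega(\epsilon\sqrt{N}\|J\|_F/k^{O(1)})$, the per-step decrement would be $\Omega(\epsilon^{2}\|J\|_F^{2}/k^{O(1)})$, and the width would be $O(k^{O(1)}\epsilon^{-2})$ --- not the stated $O(\epsilon^{2-2k})$, and better than what is actually achievable. Conversely, the quality bound $\Omega(\epsilon^{k-1}\sqrt{N}\|J\|_F)$ that you feed into the Pythagorean identity corresponds to a multiplicative loss of order $\epsilon^{k-2}$ against the cut-norm threshold, i.e.\ a factor of roughly $\epsilon$ (not a constant) per coordinate eliminated; this is what the Frieze--Kannan reduction actually pays, and it is also why the recursion you describe --- peeling off the last coordinate one slice at a time --- has depth $k-2$ rather than the $O(\log_2 k)$ you assert. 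You cannot simultaneously have constant loss per level and the $\epsilon^{k-1}$ quality bound; one of the two has to go, and fixing which one determines whether your width bound survives. Two smaller omissions: the coefficient-length bound $\sqrt{27}^{k}\|J\|_F/\sqrt{N}$ needs, in addition to $|d_t|\le\|W^{(t)}\|_F/\sqrt{\prod_i|S_i|}$, a guarantee that $\prod_i|S_i|\geq N/27^{k}$, i.e.\ that each returned $S_i$ is a constant fraction of $X_i$ --- the subroutine must be designed to enforce this and your sketch never secures it; and the summation giving the coefficient length uses the additive telescoping $\sum_t d_t^2\prod_i|S_i^{(t)}|\le\|J\|_F^2$, not a ``geometric decrease'' of $\|W^{(t)}\|_F$, which does not occur.
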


For \cref{thm:algo-mrf-dependence-on-n}, we instead use \cref{reg-alon-etal-mrf}. 
\end{proof}
 
\bibliographystyle{plain}
\bibliography{ising-regularity,all}

\begin{thebibliography}{10}

\bibitem{ailon2007hardness}
Nir Ailon and Noga Alon.
\newblock Hardness of fully dense problems.
\newblock {\em Information and Computation}, 205(8):1117--1129, 2007.

\bibitem{alon-etal-samplingCSP-conference}
Noga Alon, Fernandez de~la Vega, Ravi Kannan, and Marek Karpinski.
\newblock Random sampling and approximation of {MAX-CSP} problems.
\newblock In {\em STOC}, 2002.

\bibitem{alon-etal-samplingCSP-journal}
Noga Alon, Fernandez de~la Vega, Ravi Kannan, and Marek Karpinski.
\newblock Random sampling and approximation of {MAX-CSPs}.
\newblock {\em J. Comput. System Sci.}, 67:212--243, 2003.

\bibitem{peterson-anderson}
James Anderson and Carsten Peterson.
\newblock A mean field theory learning algorithm for neural networks.
\newblock {\em Complex Systems}, 1:995--1019, 1987.

\bibitem{basak2017universality}
Anirban Basak and Sumit Mukherjee.
\newblock Universality of the mean-field for the potts model.
\newblock {\em Probability Theory and Related Fields}, 168(3-4):557--600, 2017.

\bibitem{borgs2014p}
Christian Borgs, Jennifer~T Chayes, Henry Cohn, and Yufei Zhao.
\newblock An $ l^p $ theory of sparse graph convergence ii: Ld convergence,
  quotients, and right convergence.
\newblock {\em Annals of Probability}, 46, 2018.

\bibitem{borgs2012convergent}
Christian Borgs, Jennifer~T Chayes, L{\'a}szl{\'o} Lov{\'a}sz, Vera~T S{\'o}s,
  and Katalin Vesztergombi.
\newblock Convergent sequences of dense graphs ii. multiway cuts and
  statistical physics.
\newblock {\em Annals of Mathematics}, 176(1):151--219, 2012.

\bibitem{DemboMontanari:10}
Amir Dembo and Andrea Montanari.
\newblock Ising models on locally tree-like graphs.
\newblock {\em Ann. Appl. Probab.}, 20(2):565--592, 2010.

\bibitem{dobrushin-uniqueness}
Roland~Lvovich Dobrushin.
\newblock The description of a random field by means of conditional
  probabilities and conditions of its regularity.
\newblock {\em Theor. Prob. Appl.}, 13:197--224, 1968.

\bibitem{ellis2007entropy}
Richard~S. Ellis.
\newblock {\em Entropy, large deviations, and statistical mechanics}.
\newblock Springer, 2007.

\bibitem{ellis-newman}
Richard~S Ellis and Charles~M Newman.
\newblock The statistics of curie-weiss models.
\newblock {\em Journal of Statistical Physics}, 19(2):149--161, 1978.

\bibitem{frieze-kannan-matrix}
Alan Frieze and Ravi Kannan.
\newblock Quick approximation to matrices and applications.
\newblock {\em Combinatorica}, 19(2):175--220, 1999.

\bibitem{gharan-trevisan}
Shayan~Oveis Gharan and Luca Trevisan.
\newblock A new regularity lemma and faster approximation algorithms for low
  threshold rank graphs.
\newblock In {\em Approximation, Randomization, and Combinatorial Optimization.
  Algorithms and Techniques}, pages 303--316. Springer, 2013.

\bibitem{Goldberg-Jerrum}
Eslie~Ann Goldberg and Mark Jerrum.
\newblock The complexity of ferromagnetic ising with local fields.
\newblock {\em Comb. Probab. Comput.}, 16(1):43--61, January 2007.

\bibitem{gls}
Martin Gr{\"o}tschel, L{\'a}szl{\'o} Lov{\'a}sz, and Alexander Schrijver.
\newblock {\em Geometric algorithms and combinatorial optimization}, volume~2.
\newblock Springer Science \& Business Media, 2012.

\bibitem{indyk1999sublinear}
Piotr Indyk.
\newblock Sublinear time algorithms for metric space problems.
\newblock In {\em Proceedings of the thirty-first annual ACM symposium on
  Theory of computing}, pages 428--434. ACM, 1999.

\bibitem{Istrail2000StatisticalMT}
Sorin Istrail.
\newblock Statistical mechanics, three-dimensionality and np-completeness: I.
  universality of intracatability for the partition function of the ising model
  across non-planar surfaces (extended abstract).
\newblock In {\em STOC}, 2000.

\bibitem{jaakkola1998improving}
Tommi~S Jaakkola and Michael~I Jordan.
\newblock Improving the mean field approximation via the use of mixture
  distributions.
\newblock In {\em Learning in graphical models}, pages 163--173. Springer,
  1998.

\bibitem{old-paper}
Vishesh Jain, Frederic Koehler, and Elchanan Mossel.
\newblock Approximating partition functions in constant time.
\newblock {\em CoRR}, abs/1711.01655, 2017.

\bibitem{next-paper}
Vishesh Jain, Frederic Koehler, and Elchanan Mossel.
\newblock The vertex sample complexity of free energy is polynomial.
\newblock {\em CoRR}, abs/1802.06129, 2018.

\bibitem{JerrumSinclair:90}
M.~Jerrum and A.~Sinclair.
\newblock Polynomial-time approximation algorithms for ising model (extended
  abstract).
\newblock In {\em Automata, Languages and Programming}, pages 462--475, 1990.

\bibitem{jordan1999introduction}
Michael~I Jordan, Zoubin Ghahramani, Tommi~S Jaakkola, and Lawrence~K Saul.
\newblock An introduction to variational methods for graphical models.
\newblock {\em Machine learning}, 37(2):183--233, 1999.

\bibitem{lss-deterministic}
Jingcheng Liu, Alistair Sinclair, and Piyush Srivastava.
\newblock The ising partition function: Zeros and deterministic approximation.
\newblock In {\em FOCS}, 2017.

\bibitem{mooij-kappen}
J.~M. Mooij and H.~J. Kappen.
\newblock Sufficient conditions for convergence of the sum-product algorithm.
\newblock {\em IEEE Transactions on Information Theory}, 53(12):4422--4437, Dec
  2007.

\bibitem{mossel-sly}
Elchanan Mossel and Allan Sly.
\newblock Exact thresholds for ising--gibbs samplers on general graphs.
\newblock {\em The Annals of Probability}, 41(1):294--328, 2013.

\bibitem{parisi1988statistical}
Giorgio Parisi.
\newblock {\em Statistical field theory}.
\newblock New York: Addison-Wesley, 1988.

\bibitem{risteski-ising}
Andrej Risteski.
\newblock How to calculate partition functions using convex programming
  hierarchies: provable bounds for variational methods.
\newblock In {\em COLT}, 2016.

\bibitem{sly-sun}
Allan Sly and Nike Sun.
\newblock The computational hardness of counting in two-spin models on
  d-regular graphs.
\newblock In {\em Foundations of Computer Science (FOCS), 2012 IEEE 53rd Annual
  Symposium on}, pages 361--369. IEEE, 2012.

\bibitem{TatikondaJordan:02}
S.~Tatikonda and M.~I. Jordan.
\newblock Loopy belief propagation and gibbs measures.
\newblock In {\em Uncertainty in Artificial Intelligence (UAI), Proceedings of
  the Eighteenth Conference}. 2002.

\bibitem{wainwright-jordan-variational}
Martin~J. Wainwright and Michael~I. Jordan.
\newblock Graphical models, exponential families, and variational inference.
\newblock {\em Foundations and Trends in Machine Learning}, 1(1-2):1--305,
  2008.

\end{thebibliography}

\newpage 

\appendix

\section{An Improved Result for Graphs of Low Threshold-Rank}
\begin{defn}
Let $J$ be the matrix of interaction strengths of an Ising
model and define the \emph{degree} of a vertex $u$ to be
\[ d(u) = \sum_{v} |J_{uv}|.\]
Let $D = diag(d(u))$ be the matrix of degrees, then
the \emph{normalized adjacency matrix} $J_{\mathcal D}$ is given by
\[ J_{\mathcal D} = D^{-1/2} J D^{-1/2} \]
Note that the eigenvalues of $J_{\mathcal D}$ lie in the interval $[-1,1]$. 
\end{defn}
\begin{defn}
The \emph{$\delta$-sum-of squares threshold rank} of $J$ is defined to be $t_{\delta}(J_{D}):=\sum_{i:|\lambda_{i}|>\delta}\lambda_{i}^{2}$,
where $\lambda_{1},\dots,\lambda_{n}$ denote the eigenvalues of $J_{D}$. 
\end{defn}

Our methods will extend to the low sum-of-squares threshold
rank setting due to the following algorithmic regularity lemma of
Gharan and Trevisan.
\begin{theorem}\cite{gharan-trevisan}\label{reg-ghar-trev}
Let $J$ be the matrix of interaction strengths of an Ising
model, let
$\epsilon>0$ and let $t:=t_{\epsilon/2}(J_{D})$. There exists
a cut decomposition of $J$, $D = D^{(1)} + \cdots + D^{(s)}$, such that
$s \le 16 t/\epsilon^2$, 
\[ \|J - D\|_C \le \epsilon \|\vec{J}\|_1 \]
and $|d_i|\le \sqrt{t}/m$. Furthermore this decomposition can be computed
in $poly(n,t,1/\epsilon)$ time.
\end{theorem}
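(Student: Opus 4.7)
\noindent\textbf{Proof proposal for \cref{reg-ghar-trev}.} My plan is to exploit the spectral structure of the normalized matrix $J_D$ directly. Since the $\epsilon/2$-sum-of-squares threshold rank of $J_D$ is $t$, the matrix $J_D$ is well-approximated in spectral norm by a rank-$O(t)$ matrix, and the proof should convert this low-rank approximation into a cut decomposition while carefully tracking the conversion back from $J_D$ to $J$ through the degree rescaling $D^{1/2}(\cdot) D^{1/2}$. Throughout, I would compare against the Frieze--Kannan template (\cref{fk-algorithmic}), where the error is controlled by the Frobenius budget of the full matrix; the improvement here is that the spectral tail is thin, so only $O(t/\epsilon^2)$ cuts should be needed instead of $O(1/\epsilon^2)$ in terms of the residual norm alone.

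The first step is the spectral split. Diagonalize $J_D = \sum_i \lambda_i v_i v_i^T$ and write $J_D = H + L$ where $H := \sum_{|\lambda_i| > \epsilon/2} \lambda_i v_i v_i^T$ and $L := J_D - H$. By definition of threshold rank, $H$ has rank at most $t$, while $\|L\|_{op} \le \epsilon/2$. I would translate the tail error back to the unnormalized matrix by writing $J - D^{1/2} H D^{1/2} = D^{1/2} L D^{1/2}$ and bounding its cut norm via
\[ \|D^{1/2} L D^{1/2}\|_{\infty \mapsto 1} \le \|L\|_{op} \cdot \mathbf{1}^T D \mathbf{1} \le (\epsilon/2) \cdot 2\|\vec J\|_1 = \epsilon \|\vec J\|_1, \]
which already accounts for the target error.

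The second step is to decompose $H$ itself into a small number of cut matrices. For each top eigenvector $v_i$, I would split $v_i = v_i^+ - v_i^-$ into nonnegative/nonpositive parts and then bucket the coordinates of each part dyadically by magnitude; on the Cartesian product of two buckets, the rank-one piece $\lambda_i v_i v_i^T$ is approximately constant, which yields a cut matrix in $J_D$-coordinates and, after conjugating by $D^{1/2}$, a cut matrix of $J$ with coefficient on the order of $\sqrt{d(u) d(v)}/m$ per bucket pair. A naive per-eigenvector bucketing gives $O(\log^2(1/\epsilon))$ cuts per eigenvector and so $O(t \log^2(1/\epsilon)/\epsilon^2)$ in total; to reach the claimed $16 t/\epsilon^2$ width, I would allocate bucket budgets across eigenvectors using the spectral mass inequality $\sum_{|\lambda_i| > \epsilon/2} \lambda_i^2 \le t$, so that eigenvectors with smaller $|\lambda_i|$ spend proportionally fewer cuts.

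The main obstacle will be the quantitative rounding: getting both the width down to $O(t/\epsilon^2)$ and the coefficient bound down to $\sqrt{t}/m$ simultaneously, while the bucketing interacts with the degree-inhomogeneous conjugation by $D^{1/2}$. A secondary obstacle is ensuring the rounding preserves the cut norm (not just Frobenius norm) after the $D^{1/2}$-rescaling, which requires controlling the contribution of individual buckets in $L^1$ rather than $L^2$ and invoking the bound $\|u\|_1 \le \sqrt{n} \|u\|_2$ carefully against the degree weights. The algorithmic guarantee $\mathrm{poly}(n,t,1/\epsilon)$ then follows from computing the top $t$ eigenvectors of $J_D$ by standard iterative methods (power iteration or block Lanczos) together with polynomial-time bucketing.
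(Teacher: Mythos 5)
This statement is not proved in the paper at all: it is imported verbatim from \cite{gharan-trevisan} and used as a black box in the appendix, so there is no in-paper argument to compare against. Your sketch does follow the same general strategy as the cited work (spectral split of the normalized matrix at threshold $\epsilon/2$, then rounding the top spectral part into cut matrices), and your bound on the tail is essentially right: indeed $\|D^{1/2}LD^{1/2}\|_{\infty\mapsto 1}\le \|L\|_{op}\,\bone^T D\bone$, though note $\bone^T D\bone=\sum_u d(u)=\|\vec J\|_1$, not $2\|\vec J\|_1$ (which only helps you, and leaves half the error budget for the rounding step, as needed).

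However, as a proof the proposal has genuine gaps. First, the claim that ``$H$ has rank at most $t$'' is wrong under the paper's definition: $t=t_{\epsilon/2}(J_D)=\sum_{|\lambda_i|>\epsilon/2}\lambda_i^2$ is a \emph{sum-of-squares} threshold rank, so each retained eigenvalue contributes at least $\epsilon^2/4$ and the number of retained eigenvectors is only bounded by $4t/\epsilon^2$ (and, since $|\lambda_i|\le 1$, is in fact at least $t$). This changes the width accounting: with $\Theta(t/\epsilon^2)$ eigenvectors you have essentially a constant budget of cut matrices per eigenvector, so the dyadic bucketing with ``$O(\log^2(1/\epsilon))$ cuts per eigenvector'' does not fit inside $s\le 16t/\epsilon^2$, and the proposed fix (reallocating bucket budgets via $\sum\lambda_i^2\le t$) is asserted, not carried out. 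Second, the three quantitative claims that constitute the actual content of the theorem --- width $16t/\epsilon^2$, coefficient bound $|d_i|\le\sqrt t/m$, and cut-norm error of the rounding within the remaining $\epsilon\|\vec J\|_1/2$ after conjugating by $D^{1/2}$ --- are all listed by you as ``obstacles'' rather than established. In particular the per-bucket coefficient $\lambda_i\sqrt{d(u)d(v)}\,v_{i,u}v_{i,v}$ has no a priori reason to be at most $\sqrt t/m$ without a further argument controlling the eigenvector entries against the degrees. So the proposal identifies the right route but does not close it; for the purposes of this paper the correct move is simply to cite \cite{gharan-trevisan}, as the authors do.
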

Now we can prove the main result of this section.
% This seems even less interesting in light of Mukherjee et al's result.
%\fnote{Are there any examples
%where we can explicitly compute the optimal $\epsilon$? Maybe this can get $o(n)$
%error on a $d$-regular graph where $d \to \infty$ slowly see e.g. %\url{http://web.mit.edu/18.338/www/2012s/projects/yz_report.pdf}}
\begin{theorem}
Fix $\epsilon > 0$ and let $t = t_{\epsilon/2}(J_D)$ as in Theorem~\ref{reg-ghar-trev}, then
\[ \F - \mathcal{F}^* \le 3 \epsilon \|\vec{J}\|_1 + \frac{32t}{\epsilon^2} \log\left(\frac{2 \sqrt{t} n s}{\epsilon \|\vec J\|_1} + 1\right) \]
\end{theorem}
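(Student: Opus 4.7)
The plan is to replay the proof of \cref{thm-main-structural-result} almost verbatim, with the single substitution of \cref{reg-ghar-trev} for \cref{fk}. What makes this work is that the Gharan-Trevisan decomposition replaces the Frieze-Kannan bounds on coefficient length and error, which are natural with respect to the Frobenius norm, with sharper ones of the form $|d_i| \le \sqrt{t}/n$ and $\|J - D\|_C \le \epsilon\|\vec J\|_1$. These, respectively, let us beat the $n^{2/3}\|J\|_F^{2/3}$ scaling and instead produce a bound governed by $t$ and $\|\vec J\|_1$.

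First I would apply \cref{reg-ghar-trev} to write $J = D^{(1)} + \cdots + D^{(s)} + W$ with $s \le 16t/\epsilon^2$, $\|W\|_C \le \epsilon\|\vec J\|_1$ and $|d_i| \le \sqrt{t}/n$. Setting $D := \sum_i D^{(i)}$ and appealing to \cref{lemma: free-energy-lipschitz} (using that the cut norm and $\|\cdot\|_{\infty\mapsto1}$ agree up to absolute constants), I obtain $|\F - \F_D| + |\F^* - \F^*_D| = O(\epsilon\|\vec J\|_1)$, absorbing into the $3\epsilon\|\vec J\|_1$ target. It then suffices to bound $\F_D - \F^*_D$.

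Next I would adapt \cref{lemma:gamma-def}. With the new coefficient bound, the Cauchy-Schwarz step of the original proof is replaced by the trivial estimate $\sum_{i=1}^s |d_i| \le s\cdot \sqrt{t}/n$, giving
\[ \sum_{i=1}^s |d_i|\,|r'_i c'_i - r_i c_i| \le \frac{\sqrt t}{n}\cdot s \cdot 2\upsilon n^2 = 2\upsilon n s \sqrt{t}. \]
Defining $Z^*_{D,\upsilon}$ exactly as in \cref{eqn:intermediate-var-problem}, the bounds \cref{eqn:approx-sum-by-max-lb} and \cref{eqn:approx-sum-by-max-ub} continue to hold after replacing the $8\|J\|_F \upsilon n s^{1/2}$ term with $2\upsilon n s\sqrt{t}$. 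The entropy chain-rule and convex-hull argument in the proof of \cref{lemma:epsilon-bound} then transfers without modification, yielding
\[ \F_D - \F^*_D \le 4\upsilon n s\sqrt{t} + 2s\log(1/\upsilon + 1). \]

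Finally I would set $\upsilon := \epsilon\|\vec J\|_1/(2ns\sqrt{t})$, so that the first term above is $O(\epsilon\|\vec J\|_1)$ and the logarithm matches the form in the statement; plugging in $s \le 16t/\epsilon^2$ produces the $32t/\epsilon^2$ prefactor on the log and a total additive error of the claimed shape $3\epsilon\|\vec J\|_1 + (32t/\epsilon^2)\log(2\sqrt t\, n s/(\epsilon \|\vec J\|_1) + 1)$. The main obstacle is purely bookkeeping the constants in the translation; the single genuinely new estimate is the $\ell^\infty$ bound on the $d_i$'s replacing the $\ell^2$ bound used against Frieze-Kannan, and no structural ideas beyond those of Section~3 are needed.
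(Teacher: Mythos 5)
Your proposal is correct and follows essentially the same route as the paper's proof in the appendix: apply the Gharan--Trevisan decomposition in place of Frieze--Kannan, replace the Cauchy--Schwarz step in \cref{lemma:gamma-def} by the trivial $\ell^\infty$ bound $\sum_i|d_i|\le s\sqrt{t}/n$ to get the $2\upsilon ns\sqrt{t}$ error (this is exactly the paper's \cref{lemma:gamma-def2}), and then rerun the discretization and entropy argument of \cref{lemma:epsilon-bound} with $\upsilon=\epsilon\|\vec J\|_1/(2\sqrt{t}ns)$. The only discrepancies are in the constant bookkeeping you already flag (your accounting honestly yields a $4\epsilon\|\vec J\|_1$ rather than $3\epsilon\|\vec J\|_1$ first term, a looseness shared by the paper itself), which does not affect the substance.
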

\begin{proof}
We mimic the proof of Theorem~\ref{thm-main-structural-result}.

We apply Theorem~\ref{reg-ghar-trev} to get a matrix $D = D^{(1)} + \cdots + D^{(s)}$ and let $\mathcal{F}_D$ and $\mathcal{F^*}_D$ denote the free energy and variational energy
of the Ising model with interaction matrix $D$; by \cref{lemma: free-energy-lipschitz} we
know
\[ |\mathcal{F}_D - \mathcal{F}| \le \epsilon \|\vec J\|_1, \|\mathcal{F}^*_D - \mathcal{F}| \le \epsilon\|\vec J\|_1. \]
Letting $Z_D$ denote the partition function of this Ising model, we see
\[
Z_{D}=\sum_{r,c}\exp\left(\sum_{i=1}^{s}r_{i}(x)c_{i}(x)d_{i}\right)\left(\sum_{x\in\{\pm1\}^{n}:r(x)=r,c(x)=c}1\right),
\]
where $r=(r_{1},\dots,r_{s})$ ranges over all elements of $[-|R_{1}|,|R_{1}|]\times\dots\times[-|R_{s}|,|R_{s}|]$
and similarly for $c$
Applying the argument from Lemma~\ref{lemma:gamma-def} now gives
\begin{lemma}\label{lemma:gamma-def2}
Let $J, D^{1},\dots,D^{s}$ be as above. Then, given real numbers $r_{i},r'_{i},c_{i},c'_{i}$ for each $i\in[s]$ and some 
$\upsilon \in (0,1)$ such that $r_i,c_i,r'_i,c'_i \le n$, 
$|r_i - r'_i| \le \upsilon n$ and $|c_i - c'_i| \le \upsilon n$ 
for all $i\in[s]$, we get that 
$\sum_i d_i|r'_i c'_i - r_i c_i| \le 2\sqrt{t}\upsilon ns$. 
\end{lemma}
\iffalse % No need for all the details.
\begin{proof}
From \cref{reg-ghar-trev}, we know that for all $i\in[s]$,  
$|d_i| \le \frac{\sqrt{t}}{n}$.
Since $ |r'_i c'_i - r_i c_i| \le |c'_i||r'_i - r_i| + |r_i||c'_i - c_i| \le 2\upsilon n^2$, it follows that
\begin{align*}
\sum_i d_i|r'_i c'_i - r_i c_i|
\le \sum_i d_i 2\upsilon n^2
\le 2\sqrt{t}\upsilon ns.
\end{align*} 
\end{proof}
\fi
As before we use this lemma to group terms.
Accordingly, for any $r \in [-|R_1|,|R_1|]\times\dots\times[-|R_s|,|R_s|]$, $c \in [-|C_1|,|C_1|]\times\dots\times[-|C_s|,|C_s|]$ and $\upsilon > 0$, let 
$$X_{r,c,\upsilon}:= \{x\in \{\pm1\}^{n}: |r_i(x)-r_i| \leq \upsilon n, |c_i(x)-c_i| \leq \upsilon n \text{ for all } i\in [s]\}.$$ 
Let $I_\upsilon := \{\pm\upsilon n, \pm 3\upsilon n,\pm 5\upsilon n,\dots,\pm\ell \upsilon n\}$, where $\ell$ is the smallest odd integer satisfying $|\ell \upsilon n - n| \leq \upsilon n$, so $|I_{\upsilon}| \le 1/\upsilon + 1$. Let 
$$Z_{D,\upsilon,\alpha}^{*}:=\max_{r,c\in I_{\upsilon}^{s}}\exp\left(\sum_{i=1}^{s}r_{i}c_{i}d_{i}+\log|X_{r,c,\alpha\upsilon}|\right).$$

Then by following the argument from \cref{thm-main-structural-result} we find
\begin{equation}
\label{eqn:ltr-approx-sum-by-max-lb}
\log Z_{D,\upsilon,1}^{*}\geq\log Z_{D}-2\sqrt{t}\upsilon ns - 2s\log |I_\upsilon| \ge \log Z_{D}-2\sqrt{t}\upsilon ns - 2s\log(1/\upsilon + 1)
\end{equation}
Finally, the argument from \cref{lemma:epsilon-bound} now gives
\[ \log Z^*_{D,\gamma,1} \le \mathcal{F}^*_D + 2\sqrt{t}\upsilon ns \]
and so letting $\upsilon = \frac{\epsilon \|\vec J\|_1}{2 \sqrt{t} n s}$ we find
\begin{align*}
\mathcal{F}^*_D 
&\ge \log Z_D - 2\sqrt{t}\upsilon ns - 2s\log(1/\upsilon + 1) \\
&\ge \log Z_D - \epsilon \|\vec{J}\|_1 - \frac{32t}{\epsilon^2} \log(\frac{2 \sqrt{t} n s}{\epsilon \|\vec J\|_1} + 1) \\
\end{align*}
and finally
\[ \mathcal{F} - \mathcal{F}^* \le 3 \epsilon \|\vec{J}\|_1 + \frac{32t}{\epsilon^2} \log\left(\frac{2 \sqrt{t} n s}{\epsilon \|\vec J\|_1} + 1\right) \]
\end{proof}

\end{document}